\newcommand{\@chapapp}{\relax}%
\def\P{\mathbb{P}}
\def\Q{\mathbb{Q}}
\def\cX{\mathcal{X}}
\def\MMD{{{\sc \rm MMD}}(\mathbb{P}, \mathbb{Q})}
\def\SqMMD{{{\sc \rm MMD}}^2(\mathbb{P}, \mathbb{Q})}
\def\EstSqMMD{\widehat{{\sc \rm MMD}}{}^2(\mathbb{P}, \mathbb{Q})}
\def\EstSqMMDIW{\widehat{{\sc \rm MMD}}{}^2_{IW}(\mathbb{P}, \mathbb{Q})}
\def\EstSqMMDMIW{\widehat{{\sc \rm MMD}}{}^2_{MIW}(\mathbb{P}, \mathbb{Q})}
\def\EstSqMMDSNIW{\widehat{{\sc \rm MMD}}{}^2_{SNIW}(\mathbb{P}, \mathbb{Q})}
\def\GenEst{\widehat{\mbox{D}}(\mathbb{P},\mathbb{Q})}
\def\GenEstSNIW{\widehat{\mbox{D}}_{SNIW}(\mathbb{P},\mathbb{Q})}
\def\GenEstIW{\widehat{\mbox{D}}_{IW}(\mathbb{P},\mathbb{Q})}
\def\EE{{\mathbb{E}}}
\def\PP{{\mathbb{P}}}
\def\ModFunc{M}
\def\ThinFunc{T}
\def\weight{w}
\def\scdots{\!\cdot\!\cdot\!\cdot\!}
\def\sdots{...\,}
\DeclarePairedDelimiter\floor{\lfloor}{\rfloor}
\begin{document}
\title{Importance Weighted Generative Networks}

\author{Maurice~Diesendruck \inst{1}[\Letter] \and
Ethan R.~Elenberg\inst{2} \and
Rajat Sen\inst{3} \and
Guy W.~Cole\inst{1} \and
Sanjay Shakkottai\inst{1} \and
Sinead A.~Williamson\inst{1}\inst{4}}

\authorrunning{M. Diesendruck et al.}
% First names are abbreviated in the running head.
% If there are more than two authors, 'et al.' is used.

\institute{The University of Texas at Austin, USA\thanks{R. Sen and S. Shakkottai were partially supported by ARO grant W911NF-17-1-0359} \and ASAPP, Inc.\thanks{Work done primarily while at UT Austin.} \and Amazon, Inc. \and CognitiveScale\\
\email{$\{$momod, rajat.sen, guywcole$\}$@utexas.edu, elenberg@asapp.com, shakkott@austin.utexas.edu, sinead.williamson@mccombs.utexas.edu}}

\maketitle              % typeset the header of the contribution
\begin{abstract}

While deep generative networks can simulate from complex data distributions, their utility can be hindered by limitations on the data available for training. Specifically, the training data distribution may differ from the target sampling distribution due to sample selection bias, or because the training data comes from a different but related distribution. We present methods to accommodate this difference via \textit{importance weighting}, which allow us to estimate a loss function with respect to a target distribution \textit{even if we cannot access that distribution directly}. These estimators, which differentially weight the contribution of data to the loss function, offer theoretical guarantees that heuristic approaches lack, while giving impressive empirical performance in a variety of settings.

\keywords{importance weights  \and generative networks \and bias correction}
\end{abstract}
\section{Introduction}
Deep generative models have important applications in many fields: we can automatically generate illustrations for text \cite{ZhaXiLu2017}; simulate video streams \cite{VonPirTor2016} or  molecular fingerprints \cite{KadAliKaz2017}; and create privacy-preserving versions of medical time-series data \cite{EstHylRat2017}. Such models use a neural network to parametrize a function $G(Z)$, which maps random noise $Z$ to a target probability distribution $\P$. This is achieved by minimizing a loss function between simulations and data, which is equivalent to learning a distribution over simulations that is indistinguishable from $\P$ under an appropriate two-sample test. In this paper we focus on Generative Adversarial Networks (GANs) \cite{GooPouMir2014,ArjChiBot2017,Binkowski2018,Li2017mmd}, which incorporate an adversarially learned neural network in the loss function; however the results are also applicable to non-adversarial networks \cite{DziRoyGha2015,LiSweZem2015}. 

An interesting challenge arises when we do not have direct access to i.i.d.\ samples from $\P$. This could arise either because observations are obtained via a biased sampling mechanism  \cite{BolChaZou2016,ZhaWanYat2017}, or in a transfer learning setting where our target distribution differs from our training distribution. As an example of the former, a dataset of faces generated as part of a university project may contain disproportionately many young adult faces relative to the population. As an example of the latter, a Canadian hospital system might want to customize simulations to its population while still leveraging a training set of patients from the United States (which has a different statistical distribution of medical records). In both cases, and more generally, we want to generate data from a target distribution $\P$ but only have access to representative samples from a \textit{modified} distribution $\ModFunc\P$. We give a pictorial example of this setting in Figure~\ref{fig:1d_thinned_logistic}.

\begin{figure*}[h]
  \centering
  \subfloat[ Target distribution $\P$]{{\includegraphics[width=.2\textwidth]{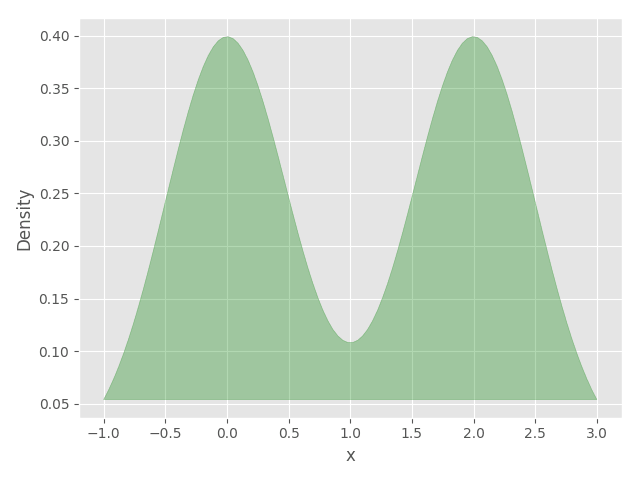}}\label{fig:1d_target_distribution}}
  \qquad
  \subfloat[Observed distribution $\ModFunc\P$ and samples from $\ModFunc\P$]{{\includegraphics[width=.2\textwidth]{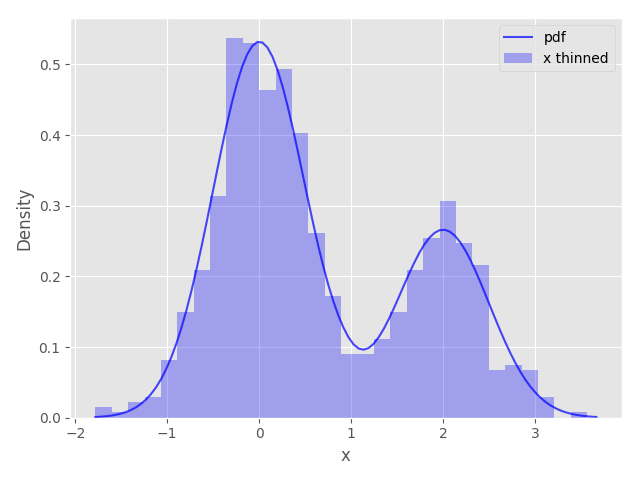}}\label{fig:1d_observed}} \qquad
  \subfloat[Simulations using a standard estimator]{{\includegraphics[width=.2\textwidth]{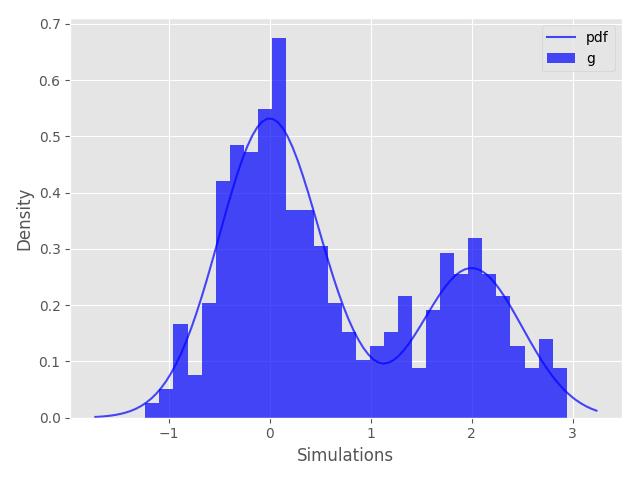}}\label{fig:1d_mmd}}
  \qquad
  \subfloat[Simulations using an importance weighted estimator]{{\includegraphics[width=.2\textwidth]{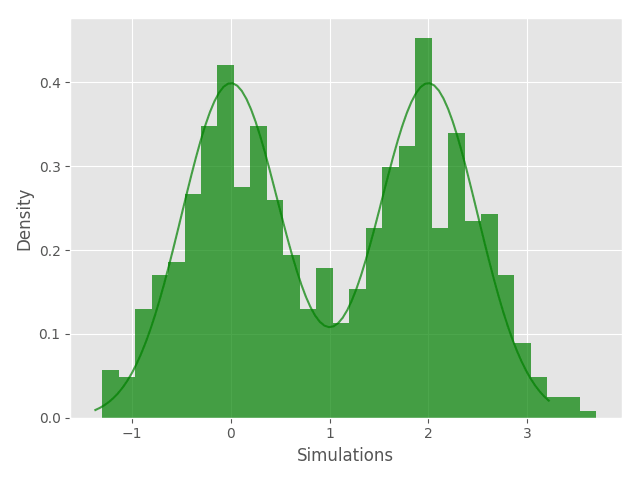}}\label{fig:1d_wmmd}}
  \caption{If our target distribution $\P$ differs from our observed distribution $\ModFunc\P$, using the standard estimator will replicate $\ModFunc\P$, while an importance weighted estimator can replicate the target $\P$. 
  }
  \label{fig:1d_thinned_logistic}
\end{figure*}

In some cases, we can approach this problem using existing methods. For example, if we can reduce our problem to a conditional data-generating mechanism, we can employ Conditional Generative Adversarial Networks (C-GANs) or related models \cite{Mehdi2014cgan,Odena2017}, which enable conditional sampling given one or more latent variables.  However, this requires that  $\ModFunc$ can be described on a low-dimensional space, and that we can sample from our target distribution over that latent space. Further, C-GANs rely on a large, labeled dataset of training samples with diversity over the conditioning variable (within each batch), which becomes a challenge when conditioning on a high-dimensional variable. For example, if we wish to modify a distribution over faces with respect to age, gender and hair length, there may be few exemplars of 80-year-old men with long hair with which to learn the corresponding conditional distribution.

In this paper, we propose an alternate approach based on importance sampling \cite{Owen2013}. Our method modifies an existing GAN by rescaling the observed data distribution $\ModFunc\P$ during training, or equivalently by reweighting the contribution of each data point to the loss function. When training a GAN with samples from $\ModFunc\P$, the standard estimator equally weights the contribution of each point, yielding an estimator of the loss with respect to 
$\ModFunc\P$ and corresponding simulations, as shown in Fig.~\ref{fig:1d_observed} and Fig.~\ref{fig:1d_mmd}. This is not ideal.

In order to yield the desired estimator with respect to our target distribution $\P$, we modify the estimator by reweighting the loss function evaluation for each sample. When the Radon-Nikodym derivative between the target and observed distributions (aka the modifier function $\ModFunc$) is known, we inversely scale each evaluation by that derivative, yielding the finite-sample importance sampling transform on the estimate, which we call the \textit{importance weighted} estimator. This reweighting asymptotically ensures that discrimination, and the corresponding GAN update, occurs with respect to $\P$ instead of $\ModFunc\P$, as shown in Fig.~\ref{fig:1d_target_distribution} and Fig.~\ref{fig:1d_wmmd}.

This approach has multiple advantages and extensions. First, if $\ModFunc$ is known, we can estimate importance weighted losses using robust estimators like the median-of-means estimator, which is crucial for controlling variance in settings where the modifier function $\ModFunc$ has a large dynamic range.  Second, even when the modifier function is only known up to a scaling factor, we can construct an alternative estimator using self-normalized sampling \cite{RobCas2004,Owen2013} to use this partial information, while still maintaining asymptotic correctness. Finally and importantly, for the common case of an unknown modifier function, we demonstrate techniques for estimating it from partially labeled data.

Our contributions are as follows: 1) We provide a novel application of traditional importance weighting to deep generative models. This has connections to many types of GAN loss functions through the theory of U-statistics. 2) We propose several variants of our importance weighting framework for different practical scenarios. When dealing with particularly difficult functions $\ModFunc$, we propose to use robust median-of-means estimation and show that it has similar theoretical guarantees under weaker assumptions, \textit{i.e.} bounded second moment. When $\ModFunc$ is not known fully (only up to a scaling factor), we propose a self-normalized estimator. 3) We conduct an extensive experimental evaluation of the proposed methods on both synthetic and real-world datasets. This includes estimating $\ModFunc$ when less than $4\%$ of the data is labeled with user-provided exemplars.

\subsection{Related Work}\label{sec:related}

Our method aims to generate samples from a distribution $\P$, given access to samples from $\ModFunc\P$. While to the best of our knowledge this has not been explicitly addressed in the GAN literature, several approaches have related goals.

\vspace{1mm}
\noindent \textbf{Domain adaptation:}
Our formulation is related to but distinct from the problem of Domain Adaptation (DA). The challenge of DA is, “If I train on one distribution and test on another, how do I maximize performance on test data?” Critically, the test data is available and extensively used. Instead, our method solves the problem, “Given only a training data distribution, how do I generate from arbitrarily modified versions of it?” The former uses two datasets -- one source and one target -- while the latter uses one dataset and accommodates an arbitrary number of targets. The methodologies are inherently different because the information available is different.

Typical approaches to DA involve finding domain-invariant feature representations for both source and target data. Blitzer, Pereira, Ben-David, and Daume \cite{blitzer2007biographies,ben2007analysis,daume2010frustratingly} write extensively on techniques involving feature correlation and mutual information within classification settings. Pan, Huang, and Gong \cite{pan2008transfer,pan2011domain,huang2007correcting,gong2012geodesic} propose methods with similar goals that find kernel representations under which source and target distributions are close. The work of \cite{huang2007correcting} and \cite{sugiyama2008direct} address covariate shift using kernel-based and importance-weighted techniques, but still inhabit a different setting from our problem since they perform estimation on specific source and target datasets.

Recently, the term DA has been used in the context of adversarially-trained image-to-image translation and downstream transfer learning tasks \cite{isola2017image,taigman2016unsupervised,zhu2017unpaired,hoffman2017cycada}. Typically the goal is to produce representations of the same image in both source and target domains. Such problems begin with datasets from both domains, whereas our setting presents only one source dataset and seeks to generate samples from a hypothetical, user-described target domain.

\vspace{1mm}
\noindent \textbf{Inverse probability weighting:}
Inverse probability weighting (IPW), originally proposed by \cite{horvitz1952} and still in wide use in the field of survey statistics \cite{mansournia2016}, can be seen as a special case of importance sampling. IPW is a weighting scheme used to correct for biased treatment assignment methods in survey sampling. In such settings, the target distribution is known and the sampling distribution is typically finite and discrete, and can easily be estimated from data.

\vspace{1mm}
\noindent \textbf{Conditional GANs:}
Conditional GANs (C-GANs) are an extension of GANs that aim to simulate from a conditional distribution, given some covariate. In the case where our modifier function $\ModFunc$ can be represented in terms of a low-dimensional covariate space, and if we can generate samples from the marginal distribution of $\ModFunc\P$ on that space, then we can, in theory, use a C-GAN to generate samples from $\P$, by conditioning on the sampled covariates.  This strategy suffers from two limitations. First, it assumes we can express $\ModFunc$ in terms of a sampleable distribution on a low-dimensional covariate space. For settings where $\ModFunc$ varies across many data dimensions or across a high-dimensional latent embedding, this ability to sample becomes untenable. Second, learning a family of conditional distributions is typically more difficult than learning a single joint distribution. As we show in our experiments, C-GANs often fail if there are too few real exemplars for a given covariate setting.

Related to C-GANs, \cite{csaba2019domain} proposes conditional generation and a classifier for assigning samples to specific discriminators. While not mentioned, such a structure could feasibly be used to preferentially sample certain modes, if a correspondence between latent features and numbered modes were known.

\vspace{1mm}
\noindent \textbf{Weighted loss:}
In the context of domain adaptation for data with discrete class labels, the strategy of reweighting the Maximum Mean Discrepancy (MMD) \cite{GreBorRas2012} based on class probabilities has been proposed by \cite{yan2017mind}. This approach, however, differs from ours in several ways: It is limited to class imbalance problems, as opposed to changes in continuous-valued latent features; it requires access to the non-conforming target dataset; it provides no theoretical guarantees about the weighted estimator; and it is not in the generative model setting.

\vspace{1mm}
\noindent \textbf{Other uses of importance weights in GANs:}
The language and use of importance weights is not unique to this application, and has been used for other purposes within the GAN context. In \cite{hjelm2017boundary}, for example, importance weights are used to provide policy gradients for GANs in a discrete-data setting. Our application is different in that our target distribution is not that of our data, as it is in \cite{hjelm2017boundary}. Instead we view our data as having been modified, and use importance weights to simulate closer to the hypothetical and desired \textit{unmodified} distribution.

%%%%%%%%%%%%%%%%%%%%%%%%%%%%%%%%%%
\section{Problem Formulation and Technical Approach}\label{sec:prob-approach}

\noindent \textbf{The problem:} Given training samples from a distribution $\ModFunc \P,$ our goal is to construct (train) a generator function $G(\cdot)$ that produces i.i.d. samples from a distribution $\P.$

To train $G(\cdot)$, we follow the methodology of a Generative Adversarial Network (GAN) \cite{GooPouMir2014}. In brief, a GAN consists of a pair of interacting and evolving neural networks -- a generator neural network with outputs that approximate the desired distribution, and a discriminator neural network that distinguishes between increasingly realistic outputs from the generator and samples from a training dataset.

The loss function is a critical feature of the GAN discriminator, and evaluates the closeness between the samples of the generator and those of the training data. Designing good loss functions remains an active area of research \cite{ArjChiBot2017,Li2017mmd}. One popular loss function is the Maximum Mean Discrepancy (MMD) \cite{GreBorRas2012}, a distributional distance that is zero if and only if the two distributions are the same. As such, MMD can be used to prevent mode collapse~\cite{SalGooZar2016,CheLiJac2017} during training.

\vspace{1mm}
\noindent \textbf{Our approach:} We are able to train a GAN to generate samples from $\P$ using a simple reweighting modification to the MMD loss function. Reweighting forces the loss function to apply greater penalties in areas of the support where the target and observed distributions differ most. 

Below, we formally describe the MMD loss function, and describe its importance weighted variants.

\vspace{1mm}
\noindent \textit{Remark 1 (Extension to other losses). } While this paper focuses on the MMD loss, we note that the above estimators can be extended to any estimator that can be expressed as the expectation of some function with respect to one or more distributions. This class includes losses such as  squared mean difference between two distributions, cross entropy loss, and autoencoder losses \cite{szekely2013energy,Hoe1948,Mis1947}.  Such losses can be estimated from data using a combination of U-statistics, V-statistics and sample averages. Each of these statistics can be reweighted, in a manner analogous to the treatment described above. We provide more comprehensive details in Table~\ref{tab:IWandSNIWestimators}, and in Section~\ref{sec:synth} we evaluate all three importance weighting techniques as applied to the standard cross entropy GAN objective.

\subsection{Maximum Mean Discrepancy between Two Distributions}\label{sec:MMD}

The MMD projects two distributions $\P$ and $\Q$ into a reproducing kernel Hilbert space (RKHS) $\mathcal{H}$, and evaluates the maximum mean distance between the two projections, \textit{i.e.}
\begin{align*}
\MMD:= \sup_{f\in \mathcal{H}}\left(\mathbf{E}_{X\sim \P}[f(X)] - \mathbf{E}_{Y\sim \Q}[f(Y)]\right).
\end{align*}
If we specify the \textit{kernel mean embedding} $\mu_{\P}$ of $\P$ as
$\mu_{\P} = \int k(x,\cdot) d\P(x)$,
where $k(\cdot,\cdot)$ is the characteristic kernel defining the RKHS, then we can write the square of this distance as
\begin{align}
&\SqMMD = ||\mu_{\P}-\mu_{\Q}||_\mathcal{H}^2 \nonumber\\[1ex] 
&= \mathbb{E}_{X,X' \sim \P}[k(X,X')] + \mathbb{E}_{Y,Y' \sim \Q}[k(Y,Y')] \nonumber \\[1ex]
&- 2 \mathbb{E}_{X \sim \P, Y \sim \Q}[k(X,Y)].
\label{eqn:MMDsq}
\end{align}
%Since projections are into infinite-dimensional space, the MMD---and hence, the squared MMD---between the two distributions is zero if and only if all of their moments are the same \citep{GreBorRas2012}.
In order to be a useful loss function for training a neural network, we must be able to estimate $\SqMMD$ from data, and compute gradients of this estimate with respect to the network parameters. Let $\{x_i\}_n$ be a sample $\{X_1=x_1,\ldots,X_n= x_n\} : X_i \sim \P$, and $\{y_i\}_m$ be a sample $\{Y_1=y_1,\ldots,Y_m=y_m\}: Y_i \sim \Q$. We can construct an unbiased estimator $\EstSqMMD$ of $\SqMMD$ \cite{GreBorRas2012} using these samples as
\begin{align}
 \EstSqMMD &= \textstyle \frac{1}{n(n-1)}\sum_{i\neq j}^n k(x_i,x_j) \nonumber\\[1ex] 
  & \textstyle+ \frac{1}{m(m-1)}\sum_{i\neq j}^m k(y_i,y_j) \nonumber\\[1ex]
  & \textstyle- \frac{2}{nm}\sum_{i=1}^n\sum_{j=1}^m k(x_i,y_j).
  \label{eqn:MMDu}
\end{align}

\subsection{Importance Weighted Estimator for Known \texorpdfstring{$\ModFunc$}{M}}\label{sec:weighted_estimator}

We begin with the case where $\ModFunc$ (which relates the distribution of the samples and the desired distribution; formally the Radon-Nikodym derivative) is known. Here, the reweighting of our loss function can be framed as an \textit{importance sampling} problem: we want to estimate $\SqMMD$,  which is in terms of the target distribution $\P$ and the distribution $\Q$ implied by our generator, but we have samples from the modified $\ModFunc\P$. Importance sampling \cite{Owen2013} provides a method for constructing an estimator for the expectation of a function $\phi(X)$ with respect to a distribution $\mathbb{P}$, by taking an appropriately weighted sum of evaluations of $\phi$ at values sampled from a different distribution. We can therefore modify the estimator in \eqref{eqn:MMDu} by weighting each term in the estimator involving data point $x_i$ using the likelihood ratio $\P(x_i)/\ModFunc(x_i)\P(x_i) = 1/\ModFunc(x_i)$, yielding an unbiased importance weighted estimator that takes the form
\begin{align}
 \EstSqMMDIW &= \textstyle \frac{1}{n(n-1)} \sum_{i\neq j}^{n}\frac{k(x_i,x_j)}{\ModFunc(x_i)\ModFunc(x_j)} \nonumber\\[1ex]
 &+ \textstyle \frac{1}{m(m-1)}\sum_{i\neq j}^{m} k(y_i,y_j) \nonumber\\[1ex]
 &- \textstyle \frac{2}{nm}\sum_{i=1}^n \sum_{j=1}^m \frac{k(x_i,y_j)}{\ModFunc(x_i)}. \label{eq:iwmmd}
\end{align}

%Note that this unbiased estimator is superficially similar to an estimator obtained by upsampling each data point $x_i$ in our training set by $\lceil 1/M(x_i)\rceil$ and then using the estimator from \eqref{eqn:MMDu}. For example, for a particular value $x_i$, if $1/M(x_i) = 3$, then two duplicates of $x_i$ would be added to the dataset, for a total of three. However, such an estimator would be biased: If two or more copies of the data point $x_i$ appear in a minibatch, then $k(x_i,x_i)$ will appear in the first term of~\eqref{eqn:MMDu}. Further, upsampling in this manner would increase the computational and memory requirements over importance weighting, allow for only integer-multiple modifications, and provide no theoretical guarantees. We compare our methods to this \textit{upsampling estimator} in Section~\ref{sec:eval}.
While importance weighting using the likelihood ratio yields an unbiased estimator \eqref{eq:iwmmd}, the estimator may not concentrate well because the weights $\{1/M(x_i)\}_n$ may be large or even unbounded. We now provide a concentration bound for the estimator in~\eqref{eq:iwmmd} for the case where weights $\{1/M(x_i)\}_n$ are upper-bounded by some maximum value. Note that weights are also lower-bounded above zero so that distributions maintain the same support.

\begin{theorem}\label{thm:IW}
Let $\EstSqMMDIW$ be the unbiased, importance weighted estimator for $\SqMMD$ defined in \eqref{eq:iwmmd}, given $m$ i.i.d samples from $M\mathbb{P}$ and $\mathbb{Q}$, and maximum kernel value K. Further assume that $0 \leq 1/M(x) \leq W$ for all $x \in \cX$. Then
%\begin{align*}
%  \textstyle &\PP \left(\EstSqMMDIW - \SqMMD > t\right) \leq C, \nonumber\\
%  &\hspace{0.5cm} C = \exp ((-2t^2m_2)/(K^2(W+1)^4))
%\end{align*}
%where $m_2 := \floor{\nicefrac{m}{2}}$.
%\end{theorem}
\begin{align*}
 &\PP \textstyle \left(\EstSqMMDIW - \SqMMD > t\right) \leq C, \nonumber \\
 &\text{where } \textstyle C = \exp ((-2t^2m_2)/(K^2(W+1)^4)) \nonumber \\
 & \hspace{1cm} \textstyle m_2 := \floor{\nicefrac{m}{2}}
\end{align*}
\end{theorem}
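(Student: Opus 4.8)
The plan is to recognize $\EstSqMMDIW$ (taking $n=m$, as in the statement) as a one-sample U-statistic of degree two over the i.i.d.\ pairs $Z_i := (X_i, Y_i)$, and then, via Hoeffding's averaging trick, to rewrite it as a mixture of ordinary averages of i.i.d.\ bounded terms, to which the classical Hoeffding inequality applies. Concretely, define the symmetric kernel
$$h(z,z') := \frac{k(x,x')}{\ModFunc(x)\ModFunc(x')} + k(y,y') - \frac{k(x,y')}{\ModFunc(x)} - \frac{k(x',y)}{\ModFunc(x')},\qquad z=(x,y),\; z'=(x',y').$$
Since $X_i \sim \ModFunc\PP$, the importance-weighting identity $\EE\!\left[k(X,X')/(\ModFunc(X)\ModFunc(X'))\right] = \EE_{X,X'\sim\PP}[k(X,X')]$ and its analogue for the cross term give $\EE[h(Z_1,Z_2)] = \SqMMD$ through \eqref{eqn:MMDsq}. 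Up to the standard bookkeeping of whether the cross term is normalized by $m^2$ or by $m(m-1)$ over $i\neq j$ (a difference that is itself a bounded average and can be folded into the same argument, and is avoided if one simply takes the degree-two U-statistic $\tfrac{1}{m(m-1)}\sum_{i\neq j} h(Z_i,Z_j)$ — still an unbiased estimator of $\SqMMD$ — as the object of study), we have $\EstSqMMDIW = \tfrac{1}{m(m-1)}\sum_{i\neq j} h(Z_i,Z_j)$.

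Next I would pin down the range of $h$. Using $0 \le k \le K$ and $1 \le 1/\ModFunc(x) \le W$ pointwise, the first summand lies in $[0, KW^2]$, the second in $[0,K]$, and each cross summand in $[0,KW]$; hence $h(z,z') \in [-2KW,\; K(W^2+1)]$, an interval of length $K(W^2+1) + 2KW = K(W+1)^2$. This is the only place the factor $(W+1)$ enters, and squaring it inside Hoeffding's bound is exactly what produces the $(W+1)^4$ in the denominator.

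Then I would invoke Hoeffding's combinatorial decomposition: with $m_2 = \floor{m/2}$,
$$\frac{1}{m(m-1)}\sum_{i\neq j} h(Z_i,Z_j) \;=\; \frac{1}{m!}\sum_{\pi \in S_m} V_\pi,\qquad V_\pi := \frac{1}{m_2}\sum_{r=1}^{m_2} h\bigl(Z_{\pi(r)},\, Z_{\pi(m_2+r)}\bigr),$$
where for each fixed $\pi$ the summands of $V_\pi$ use disjoint index pairs, so $V_\pi$ is an average of $m_2$ \emph{independent} random variables, each equal in law to $h(Z_1,Z_2)$, with mean $\SqMMD$ and supported in an interval of length $K(W+1)^2$. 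For any $s>0$, applying Markov's inequality to $\EE[\exp(s(\EstSqMMDIW - \SqMMD))]$, then Jensen's inequality (convexity of $\exp$) to pull the permutation-average outside, and Hoeffding's lemma on each mean-zero $V_\pi - \SqMMD$,
$$\PP\bigl(\EstSqMMDIW - \SqMMD > t\bigr) \;\le\; e^{-st}\,\frac{1}{m!}\sum_{\pi} \EE\bigl[e^{s(V_\pi - \SqMMD)}\bigr] \;\le\; \exp\!\Bigl(-st + \frac{s^2 K^2 (W+1)^4}{8 m_2}\Bigr),$$
and optimizing over $s$ (namely $s = 4 m_2 t / (K^2(W+1)^4)$) yields $\exp\!\left(-2 t^2 m_2 / (K^2 (W+1)^4)\right)$, as claimed.

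I expect the delicate part to be the packaging in the first two steps: collapsing the three terms of \eqref{eq:iwmmd} into a \emph{single} symmetric kernel on the paired sample, handling honestly the $m^2$-versus-$m(m-1)$ normalization of the cross term, and then computing the \emph{exact} range $K(W+1)^2$ of that kernel — any looser bound here would spoil the constant. Everything downstream is the textbook U-statistic concentration argument (Hoeffding's decomposition followed by Hoeffding's inequality), and the appearance of $m_2 = \floor{m/2}$ rather than $m$ in the exponent is precisely the fingerprint of that decomposition.
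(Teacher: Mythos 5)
Your proposal is correct and follows essentially the same route as the paper: both define the symmetric weighted kernel $h'(z,z')$ on paired samples, bound its range by $[-2KW,\,K(W^2+1)]$ (length $K(W+1)^2$, whence the $(W+1)^4$), and apply Hoeffding's U-statistic concentration with $m_2=\floor{m/2}$ independent pairs --- the paper simply cites Theorem~10 of Gretton et al.\ for the final step, whereas you unfold that theorem's proof (permutation decomposition, Jensen, Hoeffding's lemma) explicitly. Your remark about the $m^2$ versus $m(m-1)$ normalization of the cross term is a fair point that the paper's proof glosses over.
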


These guarantees are based on estimator guarantees in \cite{GreBorRas2012}, which in turn build on classical results by Hoeffding \cite{Hoe1963,Hoe1948}. 
We defer the proof of this theorem to Appendix~\ref{sec:proof1}.
%We defer the proof of this theorem to the extended version of this work~\cite{diesendruck2018importance}.

\subsection{Robust Importance Weighted Estimator for Known \texorpdfstring{$\ModFunc$}{M}}\label{sec:robust_estimator}

Theorem~\ref{thm:IW} is sufficient to guarantee good concentration of our importance weighted estimator only when $1/M(x)$ is uniformly bounded by some constant $W$, which is not too large. Many class imbalance problems fall into this setting. However, $1/M(x)$ may be unbounded in practice. Therefore, we now introduce a different estimator, which enjoys good concentration even when only $\EE_{X \sim M\P}[1/M(X)^2]$ is bounded, while $1/M(x)$ may be unbounded for many values of $x$. 

%The estimator is based on the classical idea of median of means, which goes back to~\cite{MOM_nemirovskii1983problem,MOM_jerrum1986random,MOM_alon1996space}.
The estimator is based on the classical idea of median of means~\cite{MOM_nemirovskii1983problem,MOM_jerrum1986random,MOM_alon1996space,lerasle2018monk}\footnote{\cite{lerasle2018monk} appeared concurrently and contains a different approach for the unweighted estimator. Comparisons are left for future work.}.
Given $m$ samples from $M\P$ and $\Q$, we divide these samples uniformly at random into $k$ equal sized groups, indexed $\{(1),...,(k)\}$. Let $\EstSqMMDIW^{(i)}$ be the value obtained when the estimator in \eqref{eq:iwmmd} is applied on the $i$-th group of samples. Then our median of means based estimator is given by
\begin{align}
   &\EstSqMMDMIW = \operatorname{median} \textstyle \left\{\EstSqMMDIW^{(1)}, \ldots, \EstSqMMDIW^{(k)}  \right\} . \label{eq:mommmd}
\end{align}
\begin{theorem}
\label{thm:MOM} Let $\EstSqMMDMIW$ be the asymptotically unbiased median of means estimator defined in \eqref{eq:mommmd} using $k = mt^2/(8K^2\sigma^2)$ groups. Further assume that $n\!=\!m$ and let $W_2 = \EE_{X \sim M\P}[\nicefrac{1}{M(X)^2}]$ be bounded. Then
%\begin{align*}
%  &\textstyle \PP \left( \lvert \EstSqMMDMIW  - \SqMMD \rvert > t \right) \leq C, \nonumber\\
%  &\hspace{0.5cm} C = \exp ((-mt^2)/(64K^2\sigma^2))
%\end{align*}
%where $\sigma^2 = O\left( W_2^2 + \mbox{MMD}^4(\P,\Q) \right)$.
%\end{theorem}
\begin{align*}
  &\textstyle \PP \left( \lvert \EstSqMMDMIW  - \SqMMD \rvert > t \right) \leq C, \nonumber \\
  &\text{where } \textstyle C = \exp ((-mt^2)/(64K^2\sigma^2)) \nonumber \\
  & \hspace{1cm} \textstyle \sigma^2 = O\left( W_2^2 + \mbox{MMD}^4(\P,\Q) \right).
\end{align*}
\end{theorem}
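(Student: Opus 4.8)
I would use the classical median-of-means template: show that each group estimator is unbiased with a controlled variance, convert this into a constant per-group failure probability through Chebyshev's inequality, and then boost it to an exponential bound via a Chernoff argument on the number of failing groups. \emph{Step 1 (the groups).} Since the $m$ samples from $M\P$ and the $m$ samples from $\Q$ are split uniformly at random into $k$ disjoint groups, each group consists of $g:=\floor{m/k}$ i.i.d.\ samples from $M\P$ together with $g$ i.i.d.\ samples from $\Q$; hence every $\EstSqMMDIW^{(i)}$ is an unbiased estimator of $\SqMMD$ by the same importance-sampling argument that justifies \eqref{eq:iwmmd}, and $\EstSqMMDIW^{(1)},\dots,\EstSqMMDIW^{(k)}$ are mutually independent because the groups are disjoint.

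\emph{Step 2 (variance of a single group estimator).} This is where the hypothesis $n=m$ is used: with equal sample sizes, $\EstSqMMDIW^{(i)}$ admits (following \cite{GreBorRas2012}) a one-sample U-statistic representation of degree two over the paired observations $(x_\ell,y_\ell)_{\ell=1}^{g}$ with symmetric kernel
$$h\big((x,y),(x',y')\big)=\frac{k(x,x')}{M(x)M(x')}+k(y,y')-\frac{k(x,y')}{M(x)}-\frac{k(x',y)}{M(x')}.$$
Hoeffding's variance decomposition then gives $\mathrm{Var}\big(\EstSqMMDIW^{(i)}\big)=\tfrac{4(g-2)\zeta_1+2\zeta_2}{g(g-1)}=O\big((\zeta_1+\zeta_2)/g\big)$, where $\zeta_2=\mathrm{Var}(h)$ and $\zeta_1$ is the variance of the first Hájek projection. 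For $\zeta_2$, the elementary inequality $(a+b+c+d)^2\le4(a^2+b^2+c^2+d^2)$, the bound $k(\cdot,\cdot)\le K$, independence of the inputs, and $\E_{X\sim M\P}[1/M(X)^2]\le W_2$ give $\E[h^2]\le4K^2(W_2+1)^2$, so $\zeta_2=O(K^2W_2^2)$. For $\zeta_1$ the key observation is that importance weighting makes the projection integrable even when $1/M$ is unbounded: since $\E_{X'\sim M\P}[k(x,X')/M(X')]=\E_{X'\sim\P}[k(x,X')]=\mu_{\P}(x)$, the first projection collapses to $\big(\mu_{\P}(x)-\mu_{\Q}(x)\big)/M(x)+\big(\mu_{\Q}(y)-\mu_{\P}(y)\big)$, and the reproducing property $|\mu_{\P}(x)-\mu_{\Q}(x)|=|\langle k(x,\cdot),\mu_{\P}-\mu_{\Q}\rangle_{\cH}|\le\sqrt{K}\,\MMD$ yields $\zeta_1=O\big(K\,\SqMMD\,(W_2+1)\big)$. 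Collecting the two contributions gives a bound of the form $\mathrm{Var}\big(\EstSqMMDIW^{(i)}\big)\le K^2\sigma^2/g$ with $\sigma^2=O\big(W_2^2+\mbox{MMD}^4(\P,\Q)\big)$, the two terms reflecting respectively the heaviness of the importance weights (through $\zeta_2$) and the separation between $\P$ and $\Q$ (through the first projection $\zeta_1$).

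\emph{Step 3 (Chebyshev and median boosting).} With the stated choice $k=mt^2/(8K^2\sigma^2)$ the group size is $g=m/k=8K^2\sigma^2/t^2$ (integer rounding handled as for $m_2$ in Theorem~\ref{thm:IW}), so Step 2 gives $\mathrm{Var}\big(\EstSqMMDIW^{(i)}\big)\le t^2/4$, provided the constants absorbed into $\sigma^2$ are chosen so that the variance prefactor is at most $2$; Chebyshev's inequality then yields $\PP\big(|\EstSqMMDIW^{(i)}-\SqMMD|>t\big)\le\tfrac14$ for every $i$. The median $\EstSqMMDMIW$ can lie more than $t$ from $\SqMMD$ only if at least $k/2$ of the independent ``bad'' events $\{|\EstSqMMDIW^{(i)}-\SqMMD|>t\}$ occur; their number is stochastically dominated by $\mathrm{Binomial}(k,\tfrac14)$, so Hoeffding's inequality for sums of Bernoulli variables gives $\PP\big(\mathrm{Binomial}(k,\tfrac14)\ge k/2\big)\le\exp\big(-2k(\tfrac14)^2\big)=\exp(-k/8)$. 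Substituting $k=mt^2/(8K^2\sigma^2)$ produces the claimed bound $\exp\big(-mt^2/(64K^2\sigma^2)\big)$.

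\emph{Main obstacle.} Steps 1 and 3 are routine once Step 2 is in hand; the real work is the variance bookkeeping of Step 2 — organizing the U-statistic expansion so that the possibly unbounded factors $1/M$ only ever appear inside the integrable quantities $\E_{X\sim M\P}[1/M(X)^2]\le W_2$ and $\E_{X'\sim M\P}[k(x,X')/M(X')]=\mu_{\P}(x)$, and then collapsing all resulting cross terms between $W_2$, $K$ and $\MMD$ into the single quantity $\sigma^2=O\big(W_2^2+\mbox{MMD}^4(\P,\Q)\big)$.
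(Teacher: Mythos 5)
Your proposal is correct and follows essentially the same route as the paper's proof: bound the variance of each group's U-statistic via Hoeffding's variance decomposition and a quadratic-mean inequality exploiting $\EE_{X\sim M\P}[1/\ModFunc(X)^2]\le W_2$, apply Chebyshev to get a constant per-group failure probability, and boost via a binomial tail bound on the number of failing groups. The only (immaterial) difference is in Step 2, where you bound the first-projection variance $\zeta_1$ explicitly through the Hájek projection and the reproducing property, whereas the paper simply invokes the classical inequality $2\sigma_1^2\le\sigma^2$ and bounds $\mathrm{Var}(h)$ alone.
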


We defer the proof of this theorem to Appendix~\ref{sec:proof2}.
%We defer the proof of this theorem to the extended version of this work~\cite{diesendruck2018importance}.
Note that the confidence bound in Theorem~\ref{thm:MOM} depends on the term $W_2$ being bounded. This is the second moment of $1/M(X)$ where $X \sim M\P$. Thus, unlike in Theorem~\ref{thm:IW},  this confidence bound may still hold even if $1/M(x)$ is \textit{not uniformly bounded}. When $1/M(X)$ is heavy-tailed with finite variance, \textit{e.g.} Pareto ($\alpha > 2$) or log-normal, then Theorem~\ref{thm:MOM} is valid but Theorem~\ref{thm:IW} does not apply. 

In addition to increased robustness, the median of means MMD estimator is more computationally efficient: since calculating $\EstSqMMDIW$ scales quadratically in the batch size, using the median of means estimator introduces a speed-up that is linear in the number of groups.

\subsection{Self-normalized Importance Weights for Unknown \texorpdfstring{$\ModFunc$}{M}}\label{sec:self_weighted_estimator}

To specify $\ModFunc$, we must know the forms of our target and observed distributions along any marginals where the two differ. In some settings this is available: consider for example a class rebalancing setting where we have class labels and a desired class ratio, and can estimate the observed class ratio from data. This, however, may be infeasible if $\ModFunc$ is continuous and/or varies over several dimensions, particularly if data are arriving in a streaming manner. In such a setting it may be easier to specify a {\em thinning function $\ThinFunc$ that is proportional to $\ModFunc$,} \textit{i.e.} $\ModFunc\P = \frac{\ThinFunc\P}{Z}$ for some unknown $Z$, than to estimate $\ModFunc$ directly. This is because $\ThinFunc$ can be directly obtained from an estimate of how much a given location is underestimated, without any knowledge of the underlying distribution. 

This setting---where the $1/\ModFunc$ weights used in Section~\ref{sec:weighted_estimator} are only known up to a normalizing constant---motivates the use of a \textit{self-normalized} importance sampling scheme, where the weights $w_i \propto \frac{\P(x_i)}{\ModFunc(x_i)\P(x_i)}=\frac{Z}{\ThinFunc(x_i)}$ are normalized to sum to one \cite{RobCas2004,Owen2013}. 
For example, by letting $w_i = \frac{1}{\ThinFunc(x_i)}$, the resulting self-normalized estimator for the squared MMD takes the form
\begin{align}
 \EstSqMMDIW &= \textstyle \frac{\sum_{i\neq j}^n w_i w_jk(x_i,x_j)}{\sum_{i\neq j}^n w_i w_j} \nonumber\\ 
 &+ \textstyle \sum_{i\neq j}^m \frac{k(y_i,y_j)}{m(m-1)} \nonumber\\   
 &- \textstyle 2 \frac{\sum_{i=1}^n\sum_{j=1}^m w_i k(x_i,y_j)}{m\sum_{i=1}^n w_i}.
  \label{eqn:WMMDb}
\end{align}
While use of self-normalized weights means this self-normalized estimator is biased, it is asymptotically unbiased, with the bias decreasing at a rate of $1/n$ \cite{Kon1992}. 
Although we have motivated self-normalized weights out of necessity, in practice they often trade off bias for reduced variance, making them preferable in some practical applications \cite{Owen2013}. 

\vspace{1mm}
\noindent \textit{Remark 2 (Boundedness of density ratios). }  In Theorem~\ref{thm:IW}, the weights $\{1/M(x_i)\}_n$ represent a bounded density ratio $\P/\ModFunc\P$.  For bounded distributions that are strictly positive everywhere, density ratios have finite bounds. For unbounded distributions, both cases exist. For example, ratios of Laplace distributions are bounded everywhere, while ratios of Gaussian distributions are unbounded in the tails --- without loss of generality:
\begin{align}
    &(\mbox{Laplace}) && \lim_{x\rightarrow \infty} \frac{\exp(-|x - \mu|)}{\exp(-|x|)} = \lim_{x\rightarrow \infty} \exp(|\mu|) < \infty\\
    &(\mbox{Gaussian}) && \lim_{x\rightarrow \infty} \frac{\exp(-(x - \mu)^2)}{\exp(-x^2)} = \lim_{x\rightarrow \infty} \exp(2x\mu + \mu^2) = \infty.
\end{align}
For any unnormalized modifier function $T$ bounded between positive constants $\theta_L$ and $\theta_H$, the density ratio $\P/\ModFunc\P = 1/M$ is bounded on $(0,\theta_H/\theta_L]$. To show this consider the following:
Let the weight $W = \frac{\P}{\ModFunc\P} = \frac{\P}{\ThinFunc\P/Z} = \frac{Z}{\ThinFunc} = \frac{\int \ThinFunc(x)p(x)dx}{\ThinFunc(x)} = \frac{1}{\ModFunc}$, where $Z$ is the normalizing constant for unnormalized $\ThinFunc\P$. To upper-bound this quantity, consider the following two bounds. The numerator $Z = \int \ThinFunc(x)p(x)dx = \int[p(x)\theta_H + p(x)(\ThinFunc(x) - \theta_H)]dx \leq \theta_H$, since the first term is $\theta_H$, and the second term is less than or equal to zero. The denominator $\ThinFunc(x) \geq \theta_L$, by definition. Together, the weight $W = 1/\ModFunc$ is bounded above by $\theta_H/\theta_L$.
\vspace{1mm}

More generally, in addition to not knowing the normalizing constant $Z$, we might also not know the thinning function $\ThinFunc$. For example, $\ThinFunc$ might vary along some latent dimension---perhaps we want to have more images of people fitting a certain aesthetic, rather than corresponding to a certain observed covariate or class. In this setting, a practitioner may be able to estimate  $\ThinFunc(x_i)$, or equivalently $\weight_i$, for a small number of training points $x_i$, by considering how much those training points are under- or over-represented. Continuous-valued latent preferences can therefore be expressed by applying higher weights to points deemed more appealing. From here, we can use function estimation techniques, such as neural network regression, to estimate $\ThinFunc$ from a small number of labeled data points. 

\subsection{Approximate Importance Weighting by Data Duplication}
In the importance weighting scheme described above, each data point is assigned a weight $1/M(x_i)$. We can obtain an approximation to this method by including $\lceil 1/M(x_i)\rceil$ duplicates of data point $x_i$ in our training set. We refer to this approach as \textit{importance duplication}. Importance duplication obviously introduces discretization errors, and if our estimator is a U-statistic it will introduce bias (\textit{e.g.}\ in the MMD example, if two or more copies of the data point $x_i$ appear in a minibatch, then $k(x_i,x_i)$ will appear in the first term of~\eqref{eqn:MMDu}). However, as we show in the experimental setting, even though this approach lacks theoretical guarantees it provides generally good performance.

Data duplication can be done as a pre-processing step, making it an appealing choice if we have an existing GAN implementation that we do not wish to modify. In other settings, it is less appealing, since duplicating data adds an additional step and increases the amount of data the algorithm must process. Further, if we were to use this approximation in a setting where $M$ is unknown, we would have to perform this data duplication on the fly as our estimate of $M$ changes.

\begin{table*}[h!]
\caption{Constructing importance weighted estimators for losses involving U-statistics, V-statistics and sample averages. Here, $\mathcal{U}$ is the set of all $r$-tuples of numbers from 1 to $n$ without repeats, and $\mathcal{V}$ is the set of $r$-tuples allowing repeats. Below, let $X_{u,*} = X_{u_1},\sdots,X_{u_r}$.}
\centering 
\begin{tabular}{llll }
\toprule
& $\GenEst$ & $\GenEstIW$ & $\GenEstSNIW$ \\
\midrule
U-statistic &   $\displaystyle \frac{1}{^nP_r}\sum_{u\in\mathcal{U}}g(X_{u,*})$ & $\displaystyle\frac{1}{^nP_r}\sum_{u\in\mathcal{U}}\frac{g(X_{u,*})}{\ModFunc(X_{u_1})\scdots\ModFunc(X_{u_r})}$ &
$\displaystyle\frac{\sum_{u\in\mathcal{U}}w_{u_1}\scdots w_{u_r} g(X_{u,*})}{\sum_{u\in\mathcal{U}}w_{u_1}\scdots w_{u_r}}$\\
V-statistic &$\displaystyle\frac{1}{n^r}\sum_{v\in \mathcal{V}}g(X_{v,*})$ &$\displaystyle\frac{1}{n^r}\sum_{v\in \mathcal{V}}\frac{g(X_{v,*})}{\ModFunc(X_{v_1})\scdots\ModFunc(X_{v_r})}$ & $\displaystyle\frac{\sum_{v\in \mathcal{V}} w_{v_1}\scdots w_{v_r}g(X_{v,*})}{\sum_{v_r=1}^n w_{v_1}\scdots w_{v_r}}$ \\
Average & $\displaystyle\frac{1}{nm}\sum_{i=1}^n\sum_{j=1}^m f(X_i,Y_j)$ &$\displaystyle\frac{1}{nm}\sum_{i=1}^n\sum_{j=1}^m \frac{f(X_i,Y_j)}{\ModFunc(X_i)}$ & $\displaystyle\frac{\sum_{i=1}^n w_i\sum_{j=1}^m f(X_i,Y_j)}{m\sum_{i=1}^n w_i}$\\
\bottomrule
\end{tabular}
\label{tab:IWandSNIWestimators}
\end{table*}

%%%%%%%%%%%%%%%%%%%%%%%%%%%%%%%%%%
\section{Evaluation}\label{sec:eval}

In this section, we show that our estimators, in conjunction with an appropriate generator network, allow us to generate simulations that are close in distribution to our target distribution, even when we only have access to this distribution via a biased sampling mechanism. Further, we show that our method performs comparably with, or better than, conditional GAN baselines.
% a number of reasonable alternative approaches.

Most of our weighted GAN models are based on the MMD-GAN of~\cite{Li2017mmd}, replacing the original MMD loss with either our importance weighted loss $\EstSqMMDIW$ (IW-MMD), our median of means loss $\EstSqMMDMIW$ (MIW-MMD), or our self-normalized loss $\EstSqMMDSNIW$ (SNIW-MMD). We also use a standard MMD loss with an importance duplicated dataset (ID-MMD). Other losses used in \cite{Li2017mmd} are also appropriately weighted, following the form in Table~\ref{tab:IWandSNIWestimators}. In the synthetic data examples of Section~\ref{sec:synth}, the kernel is a fixed radial basis function, while in all other sections it is adversarially trained using a discriminator network as in~\cite{Li2017mmd}. 

To demonstrate that our method is applicable to other losses, in Section~\ref{sec:synth} we also create models that use the standard cross entropy GAN loss, replacing this loss with either an importance weighted estimator (IW-CE), a median of means estimator (MIW-CE) or a self-normalized estimator (SNIW-CE). We also combine a standard cross entropy loss with an importance duplicated dataset (ID-CE). These models used a two-layer feedforward neural network with ten nodes per layer.

Where appropriate, we compare against a conditional GAN (C-GAN). If $\ModFunc$ is known exactly and expressible in terms of a lower-dimensional covariate space, a conditional GAN (C-GAN) offers an alternative method to sample from $\P$: learn the appropriate conditional distributions given each covariate value, sample new covariate values, and then sample from $\P$ using each conditional distribution. %Additionally, we evaluate a baseline upsampling scheme: add  exact copies of the underrepresented data until the augmented dataset follows $\P$, and then train a standard cross entropy GAN. This baseline is asymptotically equivalent to importance weighting. However, it requires more computation to process the same amount of data. Further, it is no longer unbiased (since we may see $k(x_i,x_i)$ terms due to replication of $x_i$), and lacks theoretical guarantees. 

\subsection{Can GANs with Importance Weighted Estimators Recover Target Distributions, Given \texorpdfstring{$\ModFunc$}{M}?}\label{sec:synth}

To evaluate whether using importance weighted estimators can recover target distributions, we consider a synthetically generated distribution that has been manipulated along a latent dimension. Under the target distribution, a latent representation $\theta_i$ of each data point lives in a ten-dimensional space, with each dimension independently Uniform(0,1). The observed data points $x_i$ are then obtained as $\theta_i^T F$, where $F_{ij} \sim \mathcal{N} (0,1)$  represents a fixed mapping between the latent space and $D$-dimensional observation space. In the training data, the first dimension of $\theta_i$ has distribution $p(\theta)=2\theta,0<\theta\leq 1$. We assume that the modifying function $M(x_i)=2\theta_{i,1}$ is observed, but that the remaining latent dimensions are unobserved.

In our experiments, we generate samples from the target distribution using each of the methods described above, and include weighted versions of the cross entropy GAN to demonstrate that importance weighting can be generalized to other losses.

To compare methods, we report the empirically estimated KL divergence between the target and generated samples in Table~\ref{tab:low_d_kl}. 
Similar results using squared MMD and energy distance are shown in Table~\ref{tab:low_d_scores} and Table~\ref{tab:low_d_scores_best} in Appendix~\ref{sec:moreExp}.
%Similar results using squared MMD and energy distance are shown in the extended version of this work \cite{diesendruck2018importance}.
For varying real dimensions $D$, importance weighted methods outperform C-GAN under a variety of measures. 

In some instances C-GAN performs well in two dimensions, but deteriorates quickly as the problem becomes more challenging with higher dimensions. We also note that many runs of C-GAN either ran into numerical issues or diverged; in these cases we report the best score among runs, before training failure.

%%%%%%%%%%%%%%%%%%%%%%%%%%%%%%%%%%%%%%%%%%%%%%%%%%%%%
% BEGIN: Results from generate_data().
\begin{table*}[h!]
  \caption[Distributional discrepancies between generated and target data samples]{Estimated KL divergence between generated and target samples (mean $\pm$ standard deviation over 20 runs).}
  \label{tab:low_d_kl}
  \centering
  \begin{tabular}{lccc}
    \toprule
    \cmidrule(r){1-4}
    Model                    & 2D                 & 4D                 & 10D   \\
    \midrule
    IW-CE      & 0.1768 $\pm$ 0.0635  & 0.4934 $\pm$ 0.1238  & 2.7945 $\pm$ 0.5966  \\
    MIW-CE     & 0.3265 $\pm$ 0.1071  & 0.6251 $\pm$ 0.1343  & 3.3093 $\pm$ 0.7179  \\
    SNIW-CE    & 0.0925 $\pm$ 0.0272  & 0.3864 $\pm$ 0.1478  & 2.3060 $\pm$ 0.6915  \\
    ID-CE      & 0.1526 $\pm$ 0.0332  & 0.3444 $\pm$ 0.0766  & 1.4128 $\pm$ 0.3288  \\
    IW-MMD     & \textbf{0.0343 $\pm$ 0.0230}  & \textbf{0.0037 $\pm$ 0.0489}  & \textbf{0.5133 $\pm$ 0.1718}  \\
    MIW-MMD    & 0.2698 $\pm$ 0.0618  & 0.0939 $\pm$ 0.0522  & 0.8501 $\pm$ 0.3271  \\
    SNIW-MMD   & 0.0451 $\pm$ 0.0132  & 0.1435 $\pm$ 0.0377  & 0.6623 $\pm$ 0.0918  \\
    C-GAN          & 0.0879 $\pm$ 0.0405  & 0.3108 $\pm$ 0.0982  & 6.9016 $\pm$ 2.8406  \\
    \bottomrule
  \end{tabular}
\end{table*}
% END: Results from generate_data().
%%%%%%%%%%%%%%%%%%%%%%%%%%%%%%%%%%%%%%%%%%%%%%%%%%%

While the above experiment can be evaluated numerically and provide good results for thinning on a 
% continuous-valued latent variable
continuous-valued variable, it is difficult to visualize the outcome. In order to better visualize whether the target distribution is correctly achieved, we also run experiments with explicit and easily measurable class distributions. In Figure~\ref{fig:multi_dampen}, we show a class rebalancing problem on MNIST digits, where an initial uneven distribution between three classes can be accurately rebalanced.
We also show good performance modifying a balanced distribution to specific boosted levels (see Appendix~\ref{sec:moreExp}).
%In the extended version of this work \cite{diesendruck2018importance}, we also show good performance modifying a balanced distribution to specific boosted levels.
Together, these experiments provide evidence that importance weighting controls the simulated distribution in the desired way.

% FIGURE: MNIST DAMPENING
\begin{figure}[h!]
\centering
    \subfloat[Source, uneven distribution of 0s, 1s, and 5s]{{\includegraphics[width=.27\columnwidth]{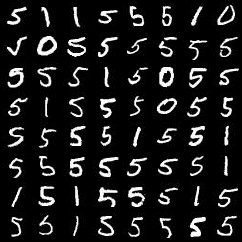} }\label{fig:multi_dampen_x_source}}
  \hspace{.05in}
    \subfloat[Source (left), simulation (right); target of $\nicefrac{1}{3}$-$\nicefrac{1}{3}$-$\nicefrac{1}{3}$]{{\includegraphics[width=.27\columnwidth]{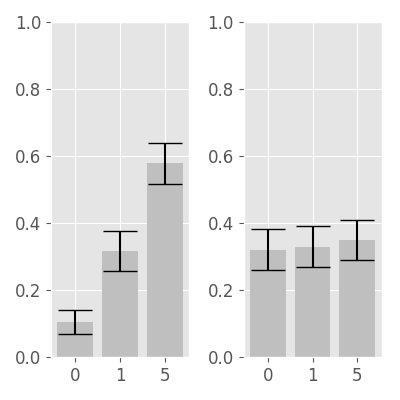} } \label{fig:multi_dampen_plots}}
  \hspace{.05in}
    \subfloat[Simulations, balanced distribution]{{\includegraphics[width=.27\columnwidth]{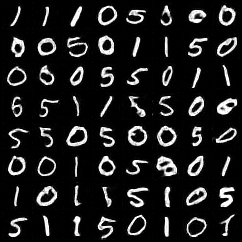} }\label{fig:multi_dampen_result}}
  \caption{Importance weights are used to accurately rebalance an uneven class distribution.}%
 \label{fig:multi_dampen}%\\
\end{figure}

\subsection{In a High-dimensional Image Setting, 
How Does Importance Weighting Compare with Conditional Generation?}\label{sec:yearbook}

\begin{figure}[t!]
  \centering
  \subfloat[Conditional DCGAN]{{\includegraphics[width=.43\columnwidth]{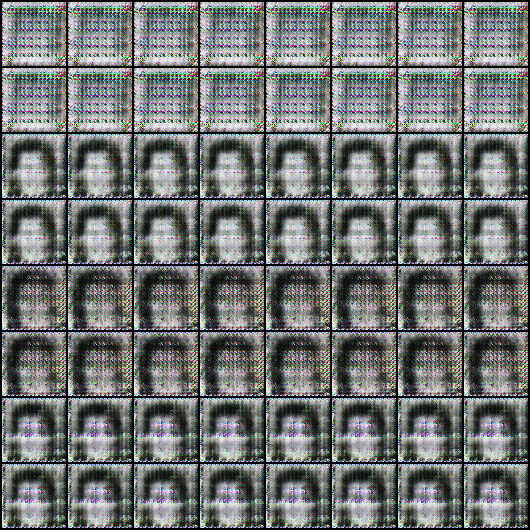} }\label{fig:yearbook6CGAN}}
  \hspace{.005in}
    \subfloat[ID-MMD]{{\includegraphics[width=.43\columnwidth]{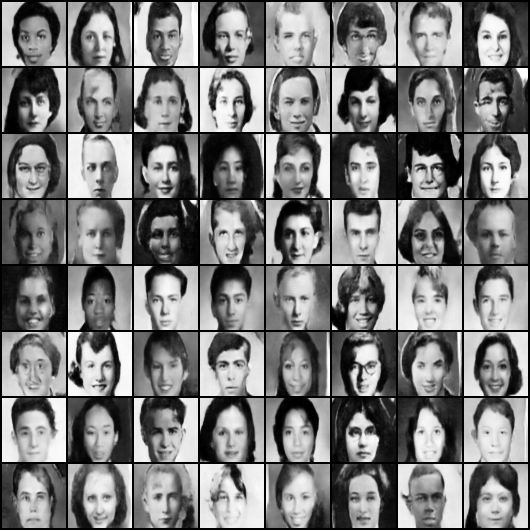} } 
    \label{fig:yearbook6US}}\\
    \subfloat[Importance Weighting (IW-MMD)]{{\includegraphics[width=.43\columnwidth]{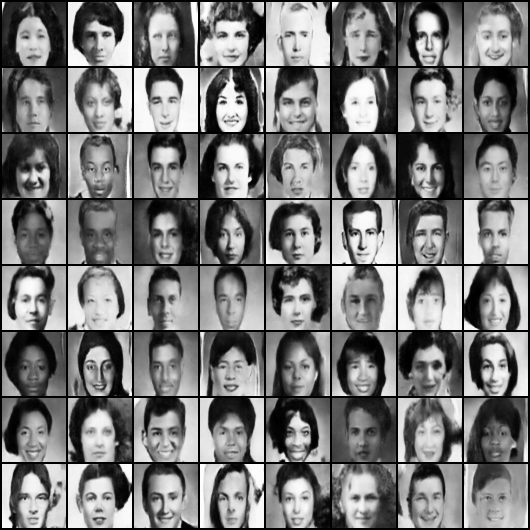} }\label{fig:yearbook6IW}}
  \hspace{.005in}
    \subfloat[Median of Means (MIW-MMD)]{{\includegraphics[width=.43\columnwidth]{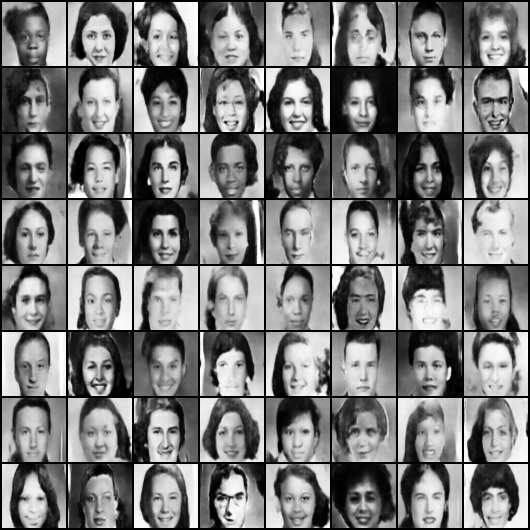} } \label{fig:yearbook6MOM}} 
  \caption{Example generated images for all example networks, Yearbook dataset~\cite{Ginosar2017portraits}. Target distribution is uniform across half-decades, while the training set is unbalanced. %C-DCGAN is unstable across a variety of training parameters, while the importance duplication and the importance weighted MMD-GAN methods produce reasonable samples (b)--(d). %\sinead{ETHAN: I've relabeled upsampling as ID-MMD... does this use MMD?}}% Ethan: yes it does
  }\label{fig:yearbook6}
\end{figure}
Next we evaluate performance of importance weighted MMD on high-dimensional image generation.
In this section we address two questions: Can our estimators generate simulations from $\P$ in such a setting, and how do the resulting images compare with those obtained using a C-GAN? To do so, we evaluate several generative models on the Yearbook dataset~\cite{Ginosar2017portraits}, which contains over $37,\!000$ high school yearbook photos across over $100$ years and demonstrates evolving styles and demographics. The goal is to produce images uniformly across each half decade. Each GAN, however, is trained on the original dataset, which contains many more photos from recent decades.

Since we have specified $\ModFunc$ in terms of a single covariate (time), we can compare with C-GANs. For the C-GAN, we use a conditional version of the standard DCGAN architecture (C-DCGAN) \cite{radford2015unsupervised}. 

Figure~\ref{fig:yearbook6} shows generated images from each network. All networks were trained until convergence. The images show a diversity across hairstyles, demographics and facial expressions, indicating the successful temporal rebalancing. Even while importance duplication introduces approximations and lacks the theoretical guarantees of the other two methods, all three importance-based methods achieve comparable quality. 
Since some covariates have fewer than $65$ images, C-DCGAN cannot learn the conditional distributions, and is unstable across a variety of training parameters.
Implementation details and additional experiments are shown in Appendix~\ref{sec:moreExp}.
%Implementation details and additional experiments are shown in the extended version of this work~\cite{diesendruck2018importance}.

\subsection{When \texorpdfstring{$\ModFunc$}{M} Is Unknown, but Can Be Estimated Up to a Normalizing Constant on a Subset of Data, Are We Able to Sample from our Target Distribution?}\label{sec:selfnorm}

%In many settings, we will not know the functional form of $\ModFunc$, or even the corresponding thinning function $\ThinFunc$. For such a setting, we propose labeling a small subset of our training data using an estimated weighting function. We train a neural network to propagate the estimated weighting function to the full dataset. The MMD-GAN uses an autoencoder, so we use the encoded images as input to this weighting function. Since the resulting continuous-valued weighting function exists in a high-dimensional space that changes as the encoder is updated, and since we do not know the full observed distribution on this space, we use self-normalized estimators for all losses, and are in a setting unsuitable for conditional methods.
In many settings, especially those with high-dimensional latent features, we will not know the functional form of $\ModFunc$, or even the corresponding thinning function $\ThinFunc$. We would still, however, like to be able to express a preference for certain areas of the latent space. To do so, we propose labeling a small subset of data using weights that correspond to preference. To expand those weights to the entire dataset, we train a neural network called the estimated weighting function. This weighting function takes encoded images as input, and outputs continuous-valued weights. Since this function exists in a high-dimensional space that changes as the encoder is updated, and since we do not know the full observed distribution on this space, we are in a setting unsuitable for conditional methods, and therefore use self-normalized estimators (SNIW-MMD).

We evaluate using a collection of sevens from the MNIST dataset, where the goal is to generate more European-style sevens with horizontal bars. Out of 5915 images, 200 were manually labeled with a weight (reciprocal of a thinning function value), where sevens with no horizontal bar were assigned a 1, and sevens with horizontal bars were assigned weights between 2 and 9 based on the width of the bar.
\begin{figure}[h!]
  \centering
  \subfloat[Data]{{\includegraphics[width=.27\columnwidth]{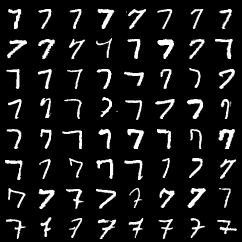} } \label{fig:mnist_sevens_data_sorted}}
  \subfloat[Generator]
  {\hspace{0.05in}
  \includegraphics[width=.27\columnwidth]{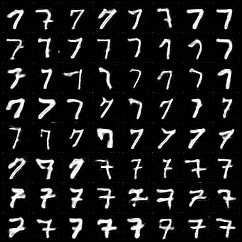} \label{fig:mnist_sevens_gens_sorted}}
 {\hspace{0.05in}
 \subfloat[KS distance]{
  \includegraphics[width=.27\columnwidth]{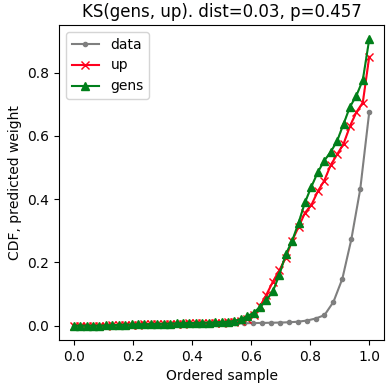} \label{fig:mnist_sevens_lines}}}\\
  \caption{Partial labeling and an importance weighted estimator boost the presence of sevens with horizontal bars. In \ref{fig:mnist_sevens_data_sorted} and \ref{fig:mnist_sevens_gens_sorted}, samples are sorted by predicted weight, and in \ref{fig:mnist_sevens_lines}, the empirical CDFs of data, generated, and importance duplicated draws, are shown, where the latter serves as a theoretical target. The  generated distribution is close in distance to the target.}%
 \label{fig:mnist_sevens}%\\
\end{figure}

Fig.~\ref{fig:mnist_sevens_data_sorted} shows 64 real images, sorted in terms of their predicted weights -- note that the majority have no horizontal bar. Fig.~\ref{fig:mnist_sevens_gens_sorted} shows 64 generated simulations, sorted in the same manner, clearly showing an increase in the number of horizontal-bar sevens. 
%This qualitative result is supported by Fig.~\ref{fig:mnist_sevens_lines}, which shows the predicted weights for both real and simulated data.

%To test the quantitative performance, we display the empirical CDFs of weights from simulations, data, and upsampled data. Using the weights of upsampled data as a theoretical target, we measure our model's performance by computing the Kolmogorov-Smirnov (KS) distance between the CDFs of simulated and upsampled weights. Fig.~\ref{fig:mnist_sevens_lines} shows a small distributional distance between simulations and their theoretical target, with $d_{KS} = 0.03$, $p=0.457$. \sinead{MO: can you elaborate on how the upsampled data is generated?}
To test the quantitative performance, we display and compare the empirical CDFs of weights from simulations, data, and importance duplicated data. For example, if a batch of data $[A, B, C]$ has weights $[1, 3, 2]$, this implies that we expected three times as many $B$-like points and two times as many $C$-like points as $A$-like points. A simulator that achieves this target produces simulations like $[A, B, B, B, C, C]$ with weights $[1, 3, 3, 3, 2, 2]$, equivalent to an importance duplication of data weights. Using importance duplicated weights as a theoretical target, we measure our model's performance by computing the Kolmogorov-Smirnov (KS) distance between CDFs of simulated and importance duplicated weights. Fig.~\ref{fig:mnist_sevens_lines} shows a small distributional distance between simulations and their theoretical target, with $d_{KS} = 0.03$, $p=0.457$. %\sinead{MO: can you elaborate on how the upsampled data is generated?} \maurice{Added blurb, see above.}

%%%%%%%%%%%%%%%%%%%%%%%%%%%%%%%%%%
\section{Conclusions and Future Work}\label{sec:disc}
We present three estimators for the MMD (and a wide class of other loss functions) between target distribution $\P$ and the distribution $\Q$ implied by our generator. These estimators can be used to train a GAN to simulate from the target distribution $\P$, given samples from a modified distribution $\ModFunc\P$. We present solutions for when $\ModFunc$ is potentially unbounded, is unknown, or is known only up to a scaling factor. 

We demonstrate that importance weighted estimators allow deep generative models to match target distributions for common and challenging cases with continuous-valued, multivariate latent features. This method avoids heuristics while providing good empirical performance and theoretical guarantees.

Though the median of means estimator offers a more robust estimate of the MMD, we may still experience high variance in our estimates, for example if we rarely see data points from a class we want to boost. An interesting future line of research is exploring how variance-reduction techniques \cite{DefBacLac2014} or adaptive batch sizes \cite{DeYadJacGol2017} could be used to overcome this problem.

\bibliographystyle{splncs04}
\bibliography{paper}

% E
% N
% D

%%%%%%%%%%%%%%%%%%%%%%%%%%%%%%%%%%%%%%%%%%%%%%%%%
%\clearpage

\begin{appendices}

%%%%%%%%%%%%%%%%%%%%%%%%%%%%%%%%%
\section{Proof of Theorem~\texorpdfstring{\ref{thm:IW}}{\ref*{thm:IW}}}
\label{sec:proof1}

Before we prove Theorem~\ref{thm:IW}, we will define some notation. Suppose $p = \{p_1,...,p_m\}$, $x = \{x_1,...,x_m\}$ and $y = \{y_1,...,y_m\}$ are the empirical samples obtained from $\P$, $\ModFunc\P$ and $\Q$, respectively. We use the following quantity as in \cite{GreBorRas2012}, with samples $p$ and $y$: 
\begin{align}\label{eq:combinedustat}
h(z_i,z_j) = k(p_i,p_j) + k(y_i,y_j) - k(p_i,y_j) - k(p_j,y_i).
\end{align}
Here, $z_i = (p_i,y_i)$ denotes a pair of i.i.d.\ samples from $\P \times \Q$. The estimator $\EstSqMMD$ can be written as \[ \EstSqMMD = \frac{1}{m(m-1)} \sum_{i \neq j} h(z_i,z_j). \]

\begin{proof}
Now consider the setting with samples $x$ and $y$. For positive $W$, and a modifying function $M(\cdot)$ with values on $[1/W, \infty]$, the weights $w(x_i) = 1 / \ModFunc(x_i)$ are therefore bounded as $0 < w(x_i) \leq W$. We rewrite the function $h$, now including weights, as

\begin{align}
h'(z_i, z_j) := w(x_i)w(x_j)k(x_i, x_j) + k(y_i, y_j) - w(x_i)k(x_i, y_j) - w(x_j)k(x_j, y_i) .
\end{align}

Assuming the kernel $k(\cdot,\cdot)$ is bounded between $0$ and $K$, we can infer function bounds such that $-2WK \leq h'(z_i, z_j) \leq K(W^2 + 1)$.

Using Theorem 10 from Gretton et al.~\cite{GreBorRas2012}, we have that
\begin{align}
\begin{split}
P(\EstSqMMDIW - \SqMMD > t) &\leq \exp \left(\frac{-2t^2m_2}{((K(W^2 + 1) - (-2WK))^2} \right)\\
&= \exp \left(\frac{-2t^2m_2}{K^2(W+1)^4} \right),
\end{split}
\end{align}
where $m_2 := \floor{\nicefrac{m}{2}}$, as the MMD requires two samples to evaluate $h(z_i, z_j)$.
\end{proof}

%%%%%%%%%%%%%%%%%%%%%%%%%%%%%%%%%%
\section{Proof of Theorem~\texorpdfstring{\ref{thm:MOM}}{\ref*{thm:MOM}}}
\label{sec:proof2}
Before we prove Theorem~\ref{thm:MOM}, we prove two functional lemmas.

\begin{lemma}\label{lem:mmdVar}
The variance of the estimator $\EstSqMMDIW$ given $m$ samples each from $\ModFunc \P$ and $\P$ is upper bounded by $2\sigma^2/m$, where $\sigma^2 = \mathrm{Var}(h(Z_i,Z_j))$ and $Z_i\sim\ModFunc \P \times \Q$. 
\end{lemma}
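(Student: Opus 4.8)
The plan is to recognize $\EstSqMMDIW$ (in the balanced case $n=m$ relevant to Theorem~\ref{thm:MOM}) as a one-sample U-statistic of order two in i.i.d.\ variables, and then apply the classical subsample-averaging representation of U-statistics due to Hoeffding, exactly as in the variance analysis of \cite{GreBorRas2012}. Concretely, pair the data as $z_i = (x_i,y_i)$; since the $x_i \sim \ModFunc\P$ and $y_j \sim \Q$ are mutually independent and i.i.d.\ within each sample, the $Z_i$ are i.i.d.\ draws from $\ModFunc\P \times \Q$. Writing $h'$ for the reweighted kernel introduced in the proof of Theorem~\ref{thm:IW} (Appendix~\ref{sec:proof1}) -- which is symmetric in its two arguments and satisfies $\EE[h'(Z_1,Z_2)] = \SqMMD$ by the unbiasedness already established -- we have $\EstSqMMDIW = \binom{m}{2}^{-1}\sum_{i<j} h'(Z_i,Z_j)$.

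First I would invoke Hoeffding's identity: for any permutation $\pi$ of $\{1,\dots,m\}$, set $V_\pi := \frac{1}{m_2}\sum_{\ell=1}^{m_2} h'(Z_{\pi(2\ell-1)},Z_{\pi(2\ell)})$ with $m_2 = \floor{\nicefrac{m}{2}}$; then $\EstSqMMDIW = \frac{1}{m!}\sum_\pi V_\pi$. The key point is that each $V_\pi$ is an average of $m_2$ \emph{independent} copies of $h'(Z_1,Z_2)$, since the blocks of indices $\{\pi(2\ell-1),\pi(2\ell)\}$ are disjoint, so $\mathrm{Var}(V_\pi) = \sigma^2/m_2$ with $\sigma^2 = \mathrm{Var}(h'(Z_1,Z_2))$. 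Next, applying the $L^2$ triangle inequality to the centered variables $V_\pi - \SqMMD$ (equivalently Jensen's inequality for variance), $\mathrm{Var}(\EstSqMMDIW) = \bigl\lVert \tfrac{1}{m!}\sum_\pi (V_\pi - \SqMMD)\bigr\rVert_2^2 \le \bigl(\tfrac{1}{m!}\sum_\pi \lVert V_\pi - \SqMMD\rVert_2\bigr)^2 = \mathrm{Var}(V_{\mathrm{id}}) = \sigma^2/m_2$. Since $m_2 \ge m/2$ (for even $m$; otherwise one loses only the harmless factor $(m-1)/m$), this yields $\mathrm{Var}(\EstSqMMDIW) \le 2\sigma^2/m$, as claimed.

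The only genuinely delicate point is the very first step: verifying that the literal estimator in \eqref{eq:iwmmd} with $n=m$ coincides with the U-statistic $\binom{m}{2}^{-1}\sum_{i<j} h'(Z_i,Z_j)$, and that $h'$ has the stated symmetry and mean $\SqMMD$. This bookkeeping is inherited essentially verbatim from the reformulation of $\EstSqMMD$ used in \cite{GreBorRas2012} and recalled in Appendix~\ref{sec:proof1}, with the kernel $k$ replaced by its importance-weighted counterpart; once it is in place, the Hoeffding representation and the Jensen step are entirely standard and require no further computation. I therefore expect no real obstacle here beyond making that identification explicit.
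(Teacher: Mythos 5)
Your proof is correct in substance but takes a genuinely different (though closely related) route from the paper's. The paper applies Hoeffding's \emph{exact} variance formula for a one-sample U-statistic of order two,
$\mathrm{Var}(\EstSqMMDIW) = \binom{m}{2}^{-1}\sum_{c=1}^{2}\binom{2}{c}\binom{m-2}{2-c}\sigma_c^2$,
where $\sigma_1^2 = \mathrm{Var}(\EE[h(Z_i,Z_j)\mid Z_i])$, and then invokes the classical inequality $2\sigma_1^2 \le \sigma^2$ to collapse this to exactly $2\sigma^2/m$ for every $m$. You instead use Hoeffding's permutation representation of the U-statistic as an average of block-wise sample means of independent terms, followed by Jensen (the $L^2$ triangle inequality), which yields $\sigma^2/m_2$ with $m_2 = \floor{\nicefrac{m}{2}}$. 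The two approaches buy slightly different things: yours avoids any mention of the conditional variance $\sigma_1^2$ and is the same device that underlies the concentration bound in Theorem~\ref{thm:IW}, so it is arguably more self-contained; the paper's gives the sharper constant, since $\sigma^2/m_2 = 2\sigma^2/(m-1) > 2\sigma^2/m$ when $m$ is odd, so your argument proves the literal claim only for even $m$ (the discrepancy you flag as a ``harmless factor'' is real but immaterial to how the lemma is used in Theorem~\ref{thm:MOM}, where it is absorbed into unspecified constants). Your caveat about identifying \eqref{eq:iwmmd} with the U-statistic $\binom{m}{2}^{-1}\sum_{i<j}h'(Z_i,Z_j)$ --- the cross term in \eqref{eq:iwmmd} includes diagonal $i=j$ pairs and is normalized by $1/m^2$ rather than $1/(m(m-1))$ --- is a legitimate bookkeeping point that the paper's own proof glosses over in exactly the same way, deferring to the conventions of Gretton et al., so you are not at a disadvantage there.
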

\begin{proof}
Let  $\sigma^2 = \mathrm{Var}(h(Z_i,Z_j))$ and let $\sigma_1^2 = \mathrm{Var}(\EE[h(Z_i,Z_j) | Z_i=z_i])$. Using Hoeffding's Theorem and the 
fact that $2\sigma_1^2 \leq \sigma^2$ \cite{Hoe1948},
we bound the variance of the unbiased MMD U-statistic by
\begin{align*}
\mathrm{Var} (\EstSqMMDMIW) &= \frac{1}{\binom{m}{2}} \sum_{c=1}^2 \binom{2}{c}\binom{m-2}{2-c}\sigma_c^2 \\
&\leq \frac{1}{\binom{m}{2}} \left[2(m-2)\sigma_1^2 + \sigma^2 \right] \\
&\leq \frac{2}{m(m-1)} \left[ (m-1)\sigma^2 \right] = \frac{2\sigma^2}{m} .
\end{align*}
\end{proof}

\begin{lemma}\label{lem:varBound}
We have the following bound:
\begin{align*}
\mathrm{Var} (h(Z_i,Z_j)) \leq 5 \left( K^2 \left( \EE \left[ \frac{1}{\ModFunc(X)^2}\right] + 1 \right)^2  +  \mbox{MMD}^4(\P,\Q) \right),
\end{align*}
where the expectation is with respect to the distribution $M\mathbb{P}$. 
\end{lemma}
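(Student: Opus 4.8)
The plan is to control the variance by its raw second moment, split into a small number of pieces chosen so that the constant comes out to exactly $5$ and the $\mbox{MMD}^4(\P,\Q)$ term appears naturally. Throughout, $h$ denotes the importance-weighted kernel from the proof of Theorem~\ref{thm:IW}: with $Z_i=(X_i,Y_i)$ where $X_i\sim\ModFunc\P$ and $Y_i\sim\Q$ are drawn independently, and with $w(x)=1/\ModFunc(x)$,
\begin{align*}
h(Z_i,Z_j)=w(X_i)w(X_j)k(X_i,X_j)+k(Y_i,Y_j)-w(X_i)k(X_i,Y_j)-w(X_j)k(X_j,Y_i).
\end{align*}
The starting point is that the importance-sampling construction behind \eqref{eq:iwmmd} gives $\EE[h(Z_i,Z_j)]=\SqMMD$, so $\mathrm{Var}(h(Z_i,Z_j))=\EE\big[(h(Z_i,Z_j)-\SqMMD)^2\big]$.

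I would then view $h(Z_i,Z_j)-\SqMMD$ as a sum of the five terms $T_1=w(X_i)w(X_j)k(X_i,X_j)$, $T_2=k(Y_i,Y_j)$, $T_3=-w(X_i)k(X_i,Y_j)$, $T_4=-w(X_j)k(X_j,Y_i)$, and $T_5=-\SqMMD$, and apply the power-mean inequality $(\sum_{\ell=1}^{5}a_\ell)^2\le 5\sum_{\ell=1}^{5}a_\ell^2$ to get $\mathrm{Var}(h)\le 5\sum_{\ell=1}^{5}\EE[T_\ell^2]$. Each summand is then bounded using only $0\le k\le K$, the mutual independence of $X_i,X_j,Y_i,Y_j$ when $i\neq j$, and the definition $W_2=\EE_{X\sim\ModFunc\P}[1/\ModFunc(X)^2]$: $\EE[T_1^2]\le K^2\,\EE[w(X_i)^2]\,\EE[w(X_j)^2]=K^2W_2^2$ (the two weights sit on independent samples, so the expectation factorizes); $\EE[T_2^2]\le K^2$; $\EE[T_3^2]=\EE[T_4^2]\le K^2\,\EE[w(X)^2]=K^2W_2$ (equality of the two by exchangeability of $Z_i$ and $Z_j$, and no independence is needed here since $k^2\le K^2$ pointwise); and $\EE[T_5^2]=\mbox{MMD}^4(\P,\Q)$. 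Summing, $\sum_{\ell}\EE[T_\ell^2]\le K^2(W_2^2+2W_2+1)+\mbox{MMD}^4(\P,\Q)=K^2(W_2+1)^2+\mbox{MMD}^4(\P,\Q)$, and multiplying by $5$ gives the claim.

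I do not anticipate a real obstacle: the argument is a one-line variance-to-second-moment bound followed by routine term-by-term estimates. The only points requiring care are (i) keeping $-\SqMMD$ as its own summand in the five-way split, which is what forces the constant to be $5$ and produces the $\mbox{MMD}^4(\P,\Q)$ contribution — a cruder bound $\mathrm{Var}(h)\le\EE[h^2]$ with a four-way split would also close the proof, and even drop the $\mbox{MMD}^4$ term, but it is cleaner to match the stated form — and (ii) tracking which terms carry one versus two weight factors, so that $\EE[T_1^2]$ picks up $W_2^2$ while $\EE[T_3^2]$ and $\EE[T_4^2]$ pick up only $W_2$.
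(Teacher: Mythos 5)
Your proposal is correct and follows essentially the same route as the paper's proof: the same five-way decomposition of $h(Z_i,Z_j)-\SqMMD$ (with $-\SqMMD$ as the fifth summand), the same inequality $(\sum_{\ell=1}^{5}a_\ell)^2\le 5\sum_{\ell=1}^{5}a_\ell^2$, and the same term-by-term bounds using $0\le k\le K$ and the independence of $X_i$ and $X_j$ to factor $\EE[w(X_i)^2 w(X_j)^2]=W_2^2$. No gaps.
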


\begin{proof}
Let $\mu =  \SqMMD$. Note that $\EE[h(Z_i,Z_j)] = \mu$. Therefore, we have the following chain,
\begin{align*}
&\mathrm{Var} (h(Z_i,Z_j))\\ &= \EE[(h(Z_i,Z_j) - \mu)^2] \\
&= \EE \left[ \left(  \frac{k(X_i,X_j)}{\ModFunc(X_i)\ModFunc(X_j)} + k(Y_i,Y_j) - \frac{k(X_i,Y_j)}{\ModFunc(X_i)} - \frac{k(X_j,Y_i)}{\ModFunc(X_j)} - \mu \right)^2\right] \\
&= 25 \EE \left[ \left(  \frac{k(X_i,X_j)}{5\ModFunc(X_i)\ModFunc(X_j)} + k(Y_i,Y_j)/5 - \frac{k(X_i,Y_j)}{5\ModFunc(X_i)} - \frac{k(X_j,Y_i)}{5\ModFunc(X_j)} - \frac{\mu}{5} \right)^2\right] \\
&\leq 25 \EE \left[ \frac{1}{5}\left(  \frac{k(X_i,X_j)^2}{\ModFunc(X_i)^2\ModFunc(X_j)^2} + k(Y_i,Y_j)^2 + \frac{k(X_i,Y_j)^2}{\ModFunc(X_i)^2} + \frac{k(X_j,Y_i)^2}{\ModFunc(X_j)^2} + \mu^2 \right)\right] \\
& \leq 5 \EE \left[ \frac{K^2}{\ModFunc(X_i)^2\ModFunc(X_j)^2}\right] + 5 K^2 + 10 \EE \left[ \frac{K^2}{\ModFunc(X_i)^2}\right] + 5\mu^2
\end{align*}
This implies the lemma as $X_i,X_j$ are independent and generated from $M\PP$. The first inequality follows from the fact that $(\sum_i p_i a_i)^2 \leq \sum_i p_i a_i^2$, if $p$ lies on the simplex. The last inequality follows from the assumption that $|k(.,.)|\leq K$. 
\end{proof}

\begin{proof}[Proof of Theorem~\ref{thm:MOM}]
Define $\tilde{\sigma}^2$ to be the variance upper bound in Lemma~\ref{lem:varBound}. Suppose we have $m$ samples from $\ModFunc\P$ and $\Q$, $z_i = (x_i,y_i)$ for $i = 1,...,m$. We divide the samples into $k = 8 \log (1/\delta)$ groups, where $\log (1/\delta) = mt^2/64K^2\sigma^2$. We form the estimators of type $\EstSqMMDIW$ for each of the groups indexed $l = 1,...,k$. Let $\EstSqMMDIW^{(l)}$ be the estimator for group $l$. 

Note that by Lemma~\ref{lem:mmdVar} the variance of $\EstSqMMDIW^{(l)}$ is bounded by $2k\tilde{\sigma}^2/m$. Therefore, with probability at least $3/4$, $\EstSqMMDIW^{(l)}$ is within $2 \times \sqrt{2k\tilde{\sigma}^2/m}$ distance of its mean. As such, the probability that the median is not within the distance $2 \times \sqrt{2k\tilde{\sigma}^2/m}$ is at most $\PP(\mathrm{Bin}(k,1/4) > k/2)$, which is exponentially small in $k$. Substituting the value of $k$ yields the result. 
\end{proof}

%%%%%%%%%%%%%%%%%%%%%%%%%%%%%%%%%%%%%%%%%
\section{Implementation and Additional Experiments}\label{sec:moreExp}

\subsection{Synthetic Data}

For the synthetic data experiment of Section~\ref{sec:synth}, we show the full results in Table~\ref{tab:low_d_scores} and Table~\ref{tab:low_d_scores_best} for three discrepancy measures: squared MMD, energy distance, and estimated KL divergence. We note that the squared MMD used in evaluation is the standard estimator.

%%%%%%%%%%%%%%%%%%%%%%%%%%%%%%%%%%%%%%%%%%%%%
% BEGIN, original results on generate_data().
%%%%%%%%%%%%%%%%%%%%%%%%%%%%%%%%%%%%%%%%%%%%%
\begin{table*}[t!]
  \caption[Distributional discrepancies between generated and target data samples]{Squared MMD, energy distance, and estimated KL divergence between generated and target samples (mean $\pm$ standard deviation over 20 runs). Note: Estimated KL divergence is based on \cite{wang2009divergence}.}%, as implemented in \cite{nhartland}.}
  \label{tab:low_d_scores}
  \centering
  \begin{tabular}{lccc}
    \toprule
    \cmidrule(r){1-4}
    Model                    & 2D                 & 4D                 & 10D   \\
    \midrule
    & & \textit{MMD}$^2$\\[1ex]
    IW-CE      & 0.0171 $\pm$ 0.0029  & 0.0214 $\pm$ 0.0030  & 0.0214 $\pm$ 0.0044  \\
    MIW-CE     & 0.0246 $\pm$ 0.0038  & 0.0293 $\pm$ 0.0066  & 0.0233 $\pm$ 0.0036  \\
    SNIW-CE    & 0.0165 $\pm$ 0.0015  & 0.0197 $\pm$ 0.0035  & 0.0186 $\pm$ 0.0035  \\
    ID-CE     & 0.0304 $\pm$ 0.0025  & 0.0230 $\pm$ 0.0019  & 0.0154 $\pm$ 0.0017  \\
    IW-MMD     & 0.0199 $\pm$ 0.0019  & 0.0174 $\pm$ 0.0010  & \textbf{0.0105 $\pm$ 0.0003}  \\
    MIW-MMD    & 0.0586 $\pm$ 0.0038  & 0.0342 $\pm$ 0.0016  & 0.0136 $\pm$ 0.0006  \\
    SNIW-MMD   & \textbf{0.0149 $\pm$ 0.0011}  & \textbf{0.0137 $\pm$ 0.0007}  & 0.0107 $\pm$ 0.0002  \\
    C-GAN          & 0.0174 $\pm$ 0.0040  & 0.0177 $\pm$ 0.0029  & 0.0630 $\pm$ 0.0302  \\
    \midrule
    & & \textit{Energy}\\[1ex]
    IW-CE      & 0.0141 $\pm$ 0.0027  & 0.0361 $\pm$ 0.0044  & 0.0794 $\pm$ 0.0203  \\
    MIW-CE     & 0.0230 $\pm$ 0.0041  & 0.0473 $\pm$ 0.0083  & 0.1040 $\pm$ 0.0188  \\
    SNIW-CE    & 0.0144 $\pm$ 0.0037  & 0.0350 $\pm$ 0.0052  & 0.0720 $\pm$ 0.0080  \\
    ID-CE     & 0.0361 $\pm$ 0.0048  & 0.0600 $\pm$ 0.0073  & 0.0998 $\pm$ 0.0156  \\
    IW-MMD     & 0.0179 $\pm$ 0.0031  & 0.0341 $\pm$ 0.0120  & 0.0700 $\pm$ 0.0274  \\
    MIW-MMD    & 0.0881 $\pm$ 0.0303  & 0.0908 $\pm$ 0.0238  & 0.2123 $\pm$ 0.0893  \\
    SNIW-MMD   & \textbf{0.0136 $\pm$ 0.0020}  & \textbf{0.0291 $\pm$ 0.0055}  & \textbf{0.0506 $\pm$ 0.0147}  \\
    C-GAN          & 0.0140 $\pm$ 0.0057  & 0.0297 $\pm$ 0.0110  & 0.5828 $\pm$ 0.5416  \\
    \midrule
    & & \textit{KL}\\[1ex]
    IW-CE      & 0.1768 $\pm$ 0.0635  & 0.4934 $\pm$ 0.1238  & 2.7945 $\pm$ 0.5966  \\
    MIW-CE     & 0.3265 $\pm$ 0.1071  & 0.6251 $\pm$ 0.1343  & 3.3093 $\pm$ 0.7179  \\
    SNIW-CE    & 0.0925 $\pm$ 0.0272  & 0.3864 $\pm$ 0.1478  & 2.3060 $\pm$ 0.6915  \\
    ID-CE     & 0.1526 $\pm$ 0.0332  & 0.3444 $\pm$ 0.0766  & 1.4128 $\pm$ 0.3288  \\
    IW-MMD     & \textbf{0.0343 $\pm$ 0.0230}  & \textbf{0.0037 $\pm$ 0.0489}  & \textbf{0.5133 $\pm$ 0.1718}  \\
    MIW-MMD    & 0.2698 $\pm$ 0.0618  & 0.0939 $\pm$ 0.0522  & 0.8501 $\pm$ 0.3271  \\
    SNIW-MMD   & 0.0451 $\pm$ 0.0132  & 0.1435 $\pm$ 0.0377  & 0.6623 $\pm$ 0.0918  \\
    C-GAN          & 0.0879 $\pm$ 0.0405  & 0.3108 $\pm$ 0.0982  & 6.9016 $\pm$ 2.8406  \\
    \bottomrule
  \end{tabular}
\end{table*}

% Minimum perf is commented out.
\begin{table*}[t!]
  \caption[Minimum distributional discrepancies between generated and target data samples]{Squared MMD, energy distance, and estimated KL divergence between generated and target samples (best over 20 runs). Note: Estimated KL divergence is based on \cite{wang2009divergence}.}%, as implemented in \cite{nhartland}.}
  \label{tab:low_d_scores_best}
  \centering
  \begin{tabular}{lccc}
    \toprule
    \cmidrule(r){1-4}
    Model                    & 2D                 & 4D                 & 10D   \\
    \midrule
    & & \textit{MMD}$^2$\\[1ex]
    IW-CE      & 0.0140  & 0.0175  & 0.0148  \\
    MIW-CE     & 0.0187  & 0.0213  & 0.0157  \\
    SNIW-CE    & 0.0141  & 0.0152  & 0.0138  \\
    ID-CE     & 0.0257  & 0.0198  & 0.0128  \\
    IW-MMD     & 0.0172  & 0.0147  & \textbf{0.0099}  \\
    MIW-MMD    & 0.0522  & 0.0321  & 0.0124  \\
    SNIW-MMD   & 0.0130  & \textbf{0.0125}  & 0.0104  \\
    C-GAN          & \textbf{0.0101}  & 0.0133  & 0.0152  \\
    \midrule
    & & \textit{Energy}\\[1ex]
    IW-CE      & 0.0099  & 0.0281  & 0.0520  \\
    MIW-CE     & 0.0163  & 0.0331  & 0.0659  \\
    SNIW-CE    & 0.0075  & 0.0239  & 0.0584  \\
    ID-CE     & 0.0306  & 0.0476  & 0.0715  \\
    IW-MMD     & 0.0128  & 0.0163  & 0.0294  \\
    MIW-MMD    & 0.0570  & 0.0578  & 0.0824  \\
    SNIW-MMD   & 0.0107  & 0.0220  & \textbf{0.0290}  \\
    C-GAN          & \textbf{0.0061}  & \textbf{0.0155}  & 0.0872  \\
    \midrule
    & & \textit{KL}\\[1ex]
    IW-CE      & 0.0754  & 0.3543  & 1.4763  \\
    MIW-CE     & 0.1534  & 0.4110  & 1.9377  \\
    SNIW-CE    & 0.0378  & 0.1787  & 1.2751  \\
    ID-CE     & 0.088  & 0.2257  & 0.8249  \\
    IW-MMD     & \textbf{-0.0079}  & \textbf{-0.0632}  & \textbf{0.1122}  \\
    MIW-MMD    & 0.2025  & 0.0171  & 0.2811  \\
    SNIW-MMD   & 0.0297  & 0.0733  & 0.4911  \\
    C-GAN          & -0.0043  & 0.1384  & 1.5569  \\
    \bottomrule
  \end{tabular}
\end{table*}
%%%%%%%%%%%%%%%%%%%%%%%%%%%%%%%%%%%%%%%%%%%%%
% END, original results on generate_data().
%%%%%%%%%%%%%%%%%%%%%%%%%%%%%%%%%%%%%%%%%%%%%

\subsection{Yearbook}

The C-DCGAN is trained for $25$ epochs using the ADAM optimizer with $\alpha=2\,\mathrm{e}\,{-4}$, $\beta_1=0.5$, and $\beta_2=0.999$, and a batch size of $64$. The latent variable has dimension $100$, and we condition on a $22$-dimensional vector corresponding to each half-decade in the dataset.

Networks for the importance weighted and median of means estimator
are trained using and RMSprop optimizer with learning rate $5\,\mathrm{e}\,{-5}$. We use the same regularizers and schedule of generator-discriminator updates as \cite{Li2017mmd}. For $\EstSqMMDIW$ a batch size of $64$ was used, and for $\EstSqMMDMIW$, a large batch of $128$ was split randomly into $8$ groups of $16$ samples.

Figure~\ref{fig:yearbookApp} shows interpolation in the latent $z$ for the half-decade experiment in Section~\ref{sec:yearbook}.
Figure~\ref{fig:yearbook4App} shows another Yearbook experiment with larger imbalance between $2$ time periods: Old (1930) and New (1980-2013).
MMD-GANs are trained for $15,\!500$ generator iterations. 

Figure~\ref{fig:yearbookBinaryCGAN} shows a related experiment in which we produce more older images given a dataset with equal amounts of old (1925-1944) and new (2000-2013) photos. Here, each time period contains over $4,\!500$ images, which increases the stability of conditional GAN training. MMD-GANs are trained until convergence ($8,\!000$ generator iterations).

\begin{figure}[t!]
\centering
    \subfloat[Conditional DCGAN]{{\includegraphics[width=.4\columnwidth]{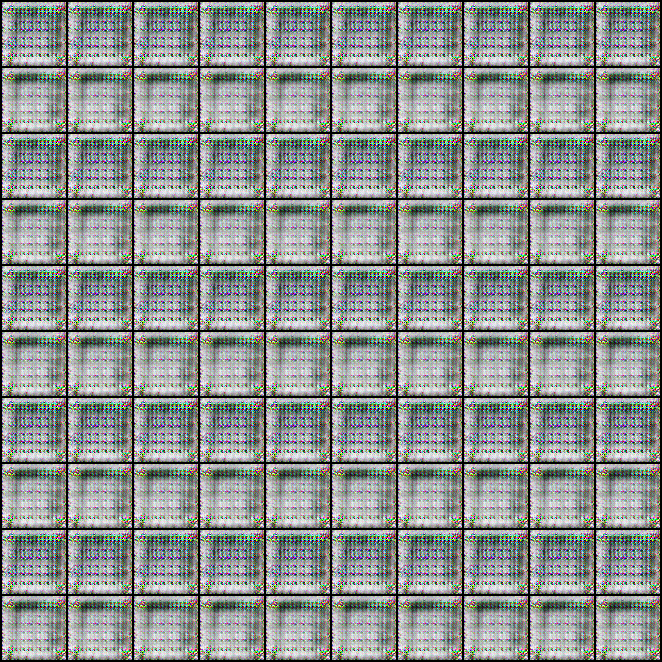} }\label{fig:yearbookBinaryCGANyearbook}}
  \hspace{.05in}
    \subfloat[Importance Duplication]{{\includegraphics[width=.4\columnwidth]{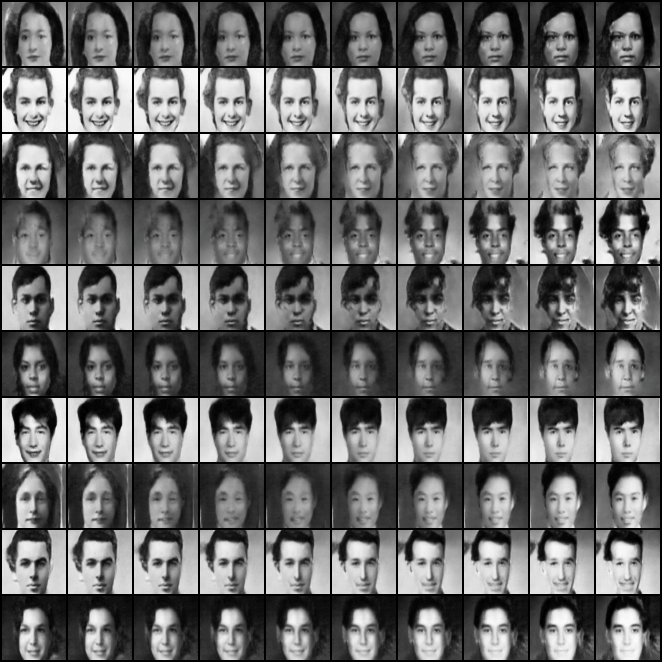} } \label{fig:yearbookUSinterp}}\\
    \subfloat[Importance Weighting (IW-MMD)]{{\includegraphics[width=.4\columnwidth]{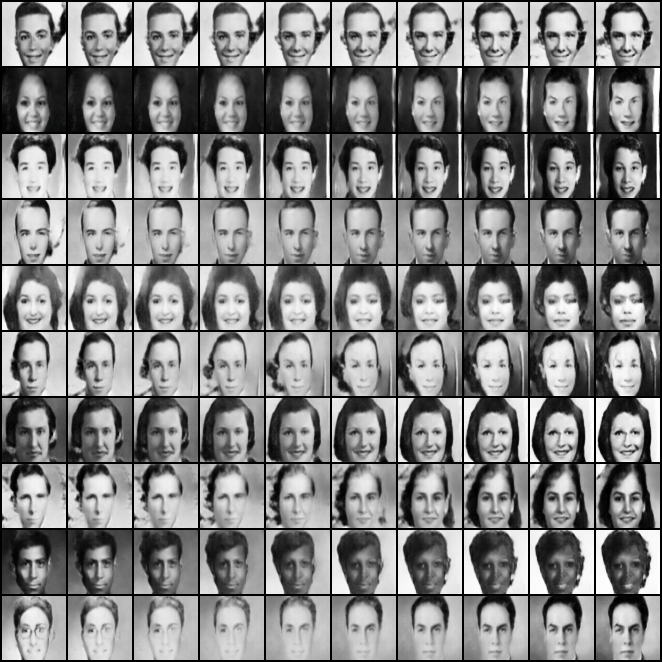} }\label{fig:yearbookIWinterp}}
  \hspace{.05in}
     \subfloat[Median of Means (MIW-MMD)]{{\includegraphics[width=.4\columnwidth]{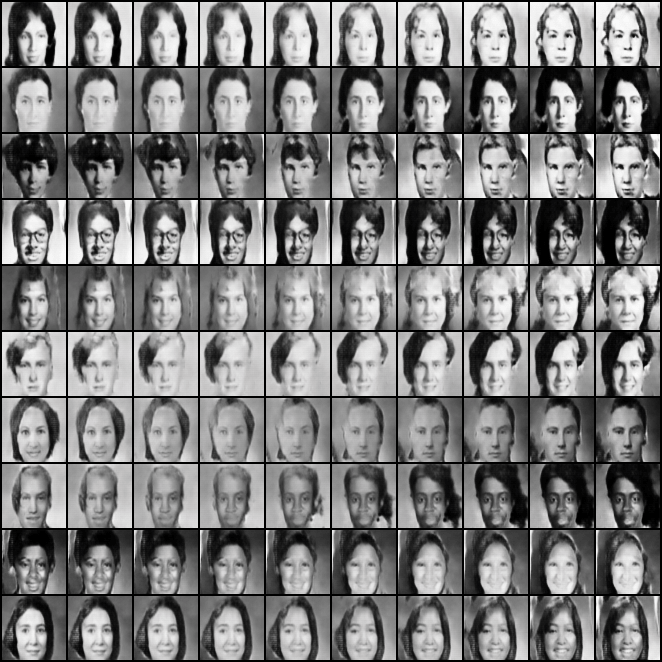} } \label{fig:yearbookMOMinterp}}\\
  \caption{Example interpolations in the latent $z$ space, half-decades experiment.}%
 \label{fig:yearbookApp}%\\
\end{figure}

\begin{figure}
\centering
  \subfloat[Conditional DCGAN]{{\includegraphics[width=.25\columnwidth]{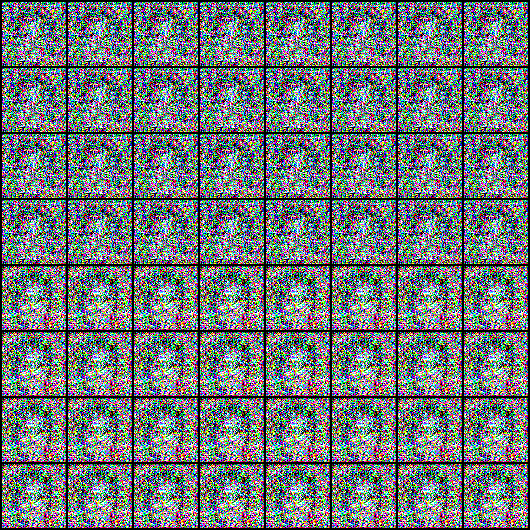} }\label{fig:yearbook4CGAN}}
  \hspace{.05in}
    \subfloat[Importance Duplication]{{\includegraphics[width=.25\columnwidth]{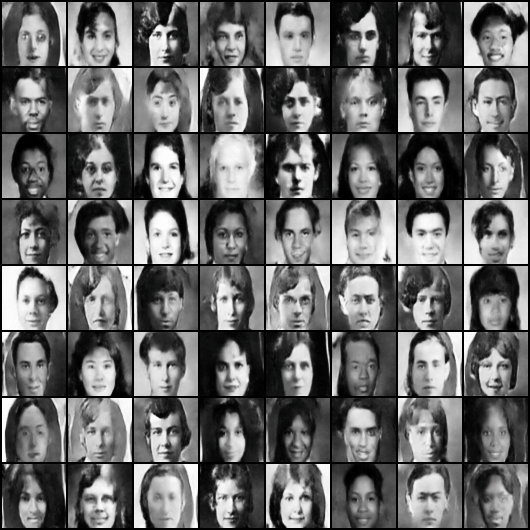} } \label{fig:yearbook4US}}\\
    \subfloat[Importance Weighting]{{\includegraphics[width=.25\columnwidth]{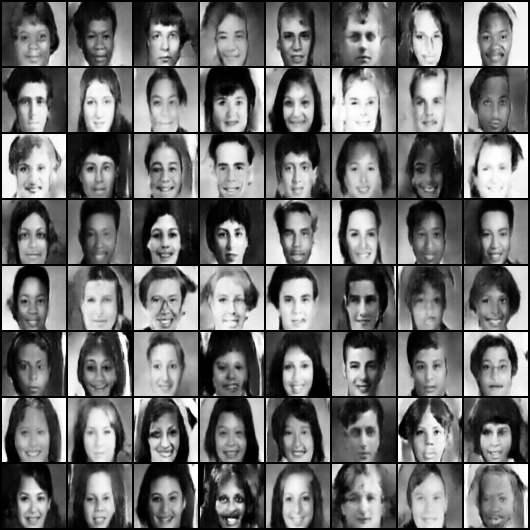} }\label{fig:yearbook4IW}}
  \hspace{.05in}
    \subfloat[Median of Means]{{\includegraphics[width=.25\columnwidth]{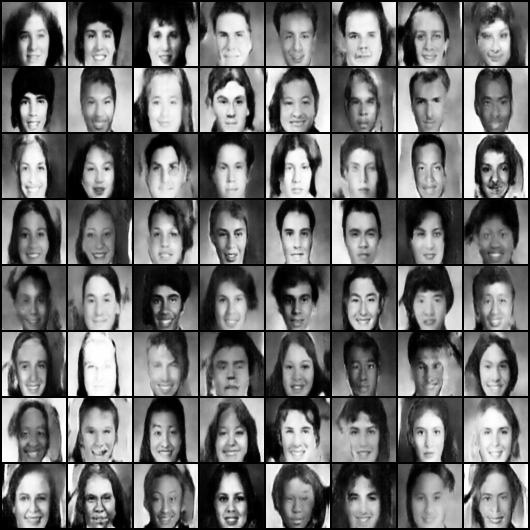} } \label{fig:yearbook4MOM}} \\
 \subfloat[Conditional DCGAN]{{\includegraphics[width=.25\columnwidth]{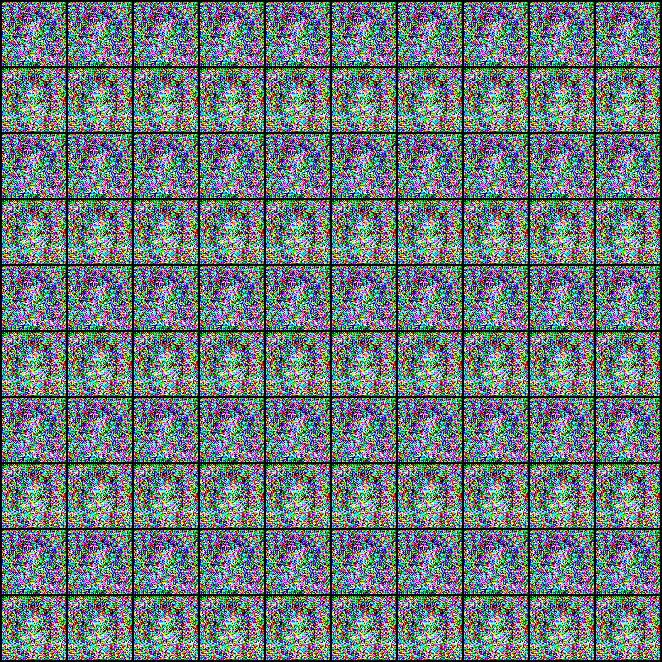} }\label{fig:yearbook4CGANyearbook}}
  \hspace{.05in}
    \subfloat[Importance Duplication]{{\includegraphics[width=.25\columnwidth]{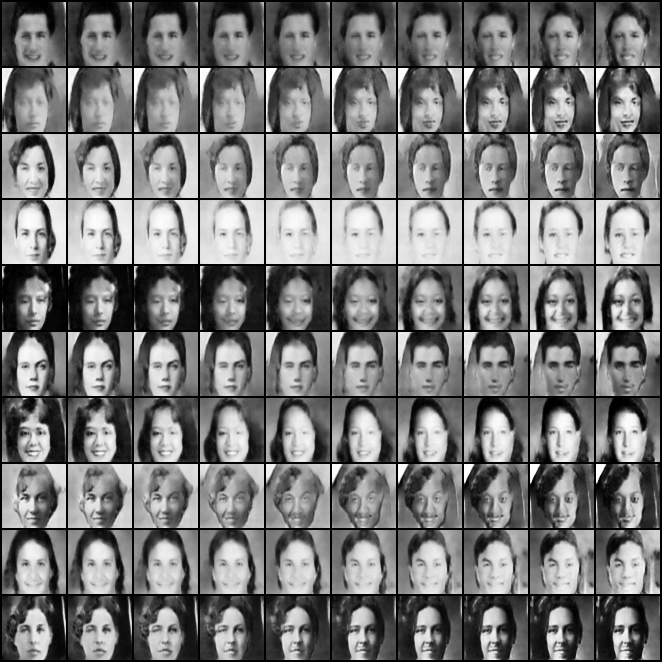} } \label{fig:yearbook4USinterp}}\\
    % \subfloat[Importance Weighting]{{\includegraphics[width=.25\columnwidth]{years4_iw_mmd-gan_350_interpolate} }\label{fig:yearbook4IWinterp}}
    \subfloat[Importance Weighting]{{\includegraphics[width=.25\columnwidth]{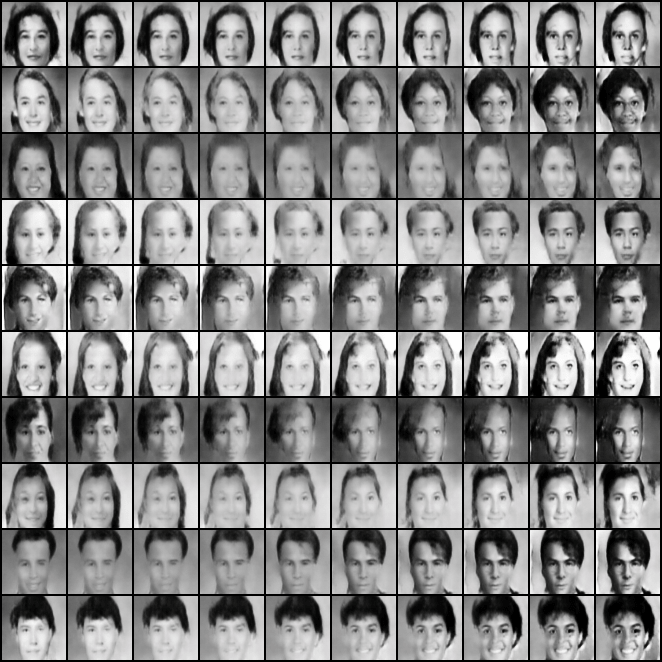} }\label{fig:yearbook4IWinterp}}
  \hspace{.05in}
    \subfloat[Median of Means]{{\includegraphics[width=.25\columnwidth]{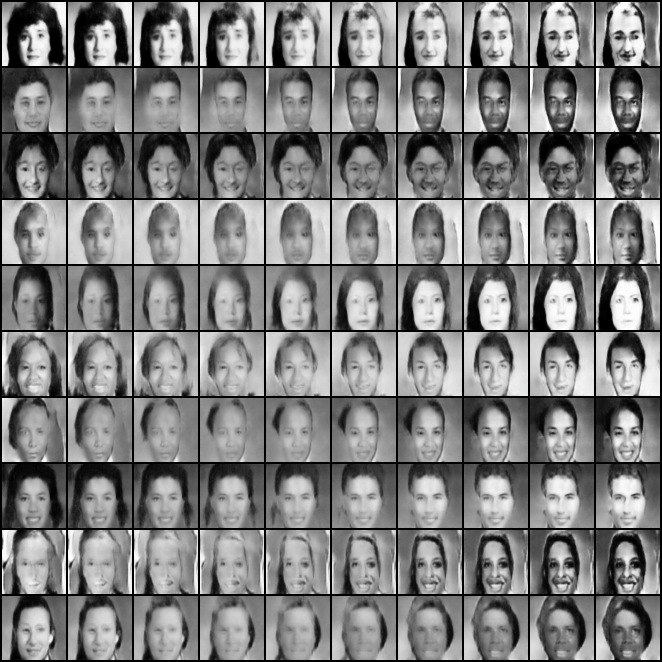} } \label{fig:yearbook4MOMinterp}}
  \caption{Example generated yearbook images from two time periods: Old (1930) and Recent (1980-2013). The target distribution is 50\%/50\%, while the training set is 1\%/99\%. Again, C-DCGAN is unstable across a variety of training parameters, while the importance weighted MMD-GAN methods produce reasonable samples (b)--(d) with meaningful interpolations in the latent space (f)--(h).}
 \label{fig:yearbook4App}%\\
\end{figure}

\begin{figure}[t!]
  \centering
  \subfloat[Conditional DCGAN]{{\includegraphics[width=.25\columnwidth]{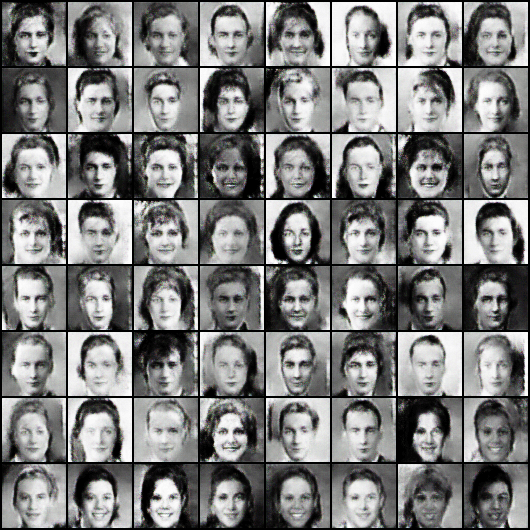} }\label{fig:yearbookBinaryCGANDCGAN}}
  \hspace{.05in}
    \subfloat[Importance Duplication]{{\includegraphics[width=.25\columnwidth]{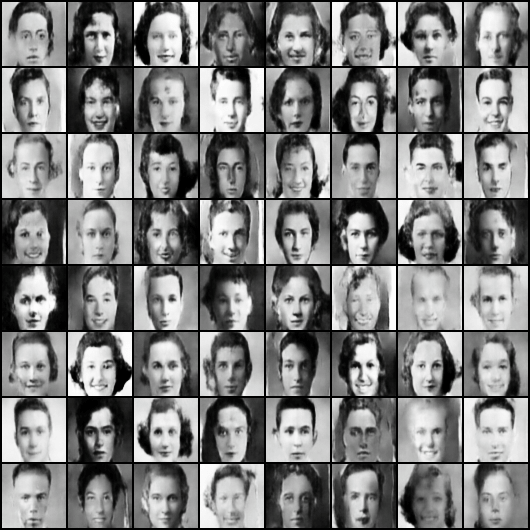} } \label{fig:yearbookBinaryUS}}\\
    \subfloat[Importance Weighting]{{\includegraphics[width=.25\columnwidth]{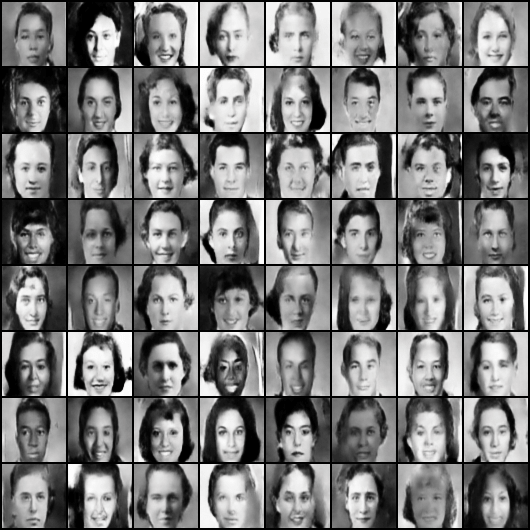} }\label{fig:yearbookBinaryIW}}
    \subfloat[Median of Means]{{\includegraphics[width=.25\columnwidth]{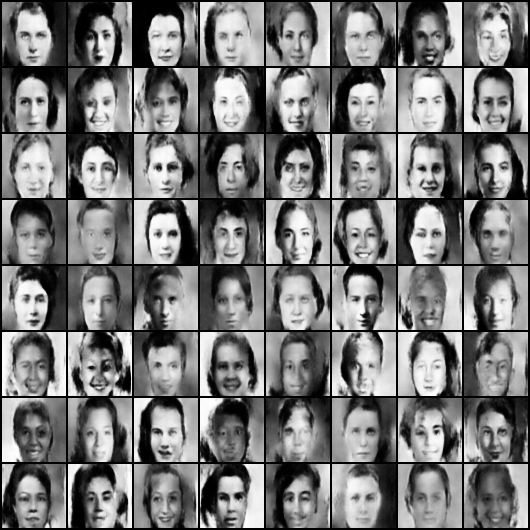} } \label{fig:yearbookBinaryMOM}} \\
    \subfloat[Conditional DCGAN]{{\includegraphics[width=.25\columnwidth]{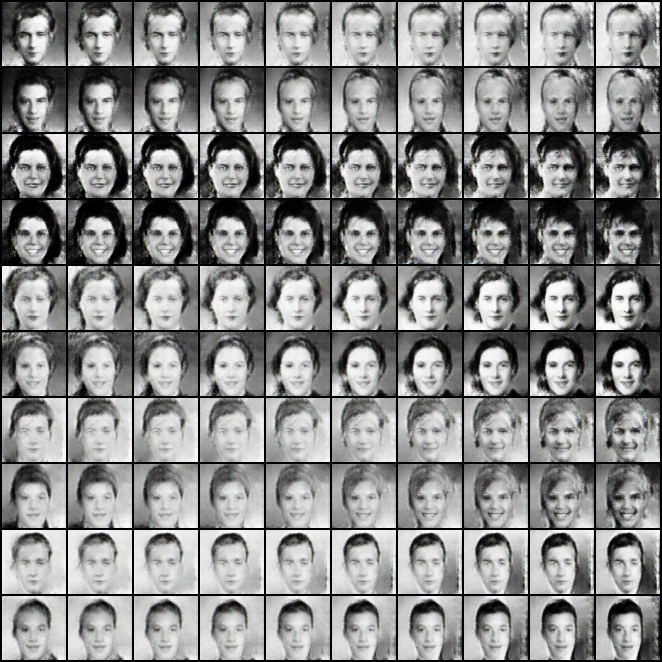} }\label{fig:yearbookBinaryCGANinterp}}
  \hspace{.05in}
    \subfloat[Importance Duplication]{{\includegraphics[width=.25\columnwidth]{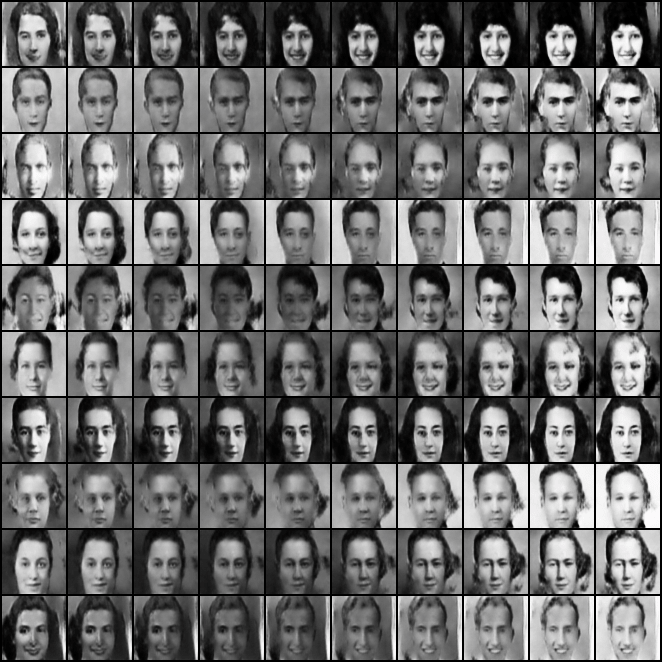} } \label{fig:yearbookBinaryUSinterp}}\\
    \subfloat[Importance Weighting]{{\includegraphics[width=.25\columnwidth]{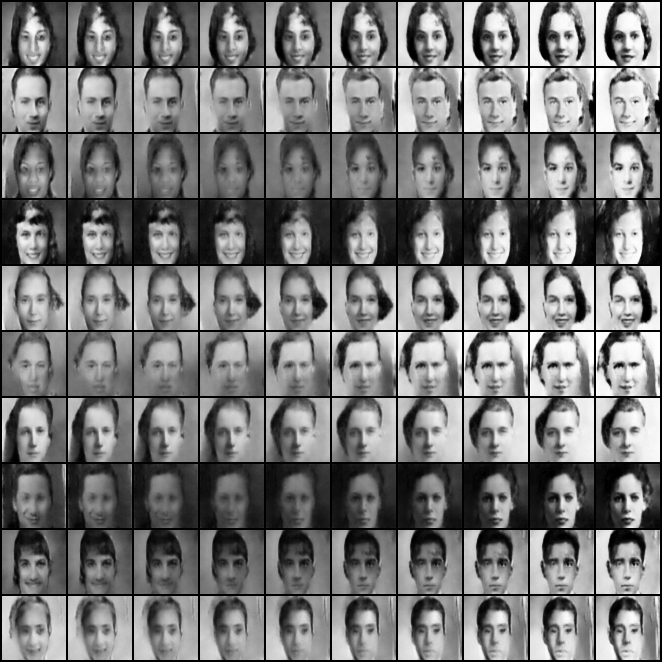} }\label{fig:yearbookBinaryIWinterp}}
  \hspace{.05in}
    \subfloat[Median of Means]{{\includegraphics[width=.25\columnwidth]{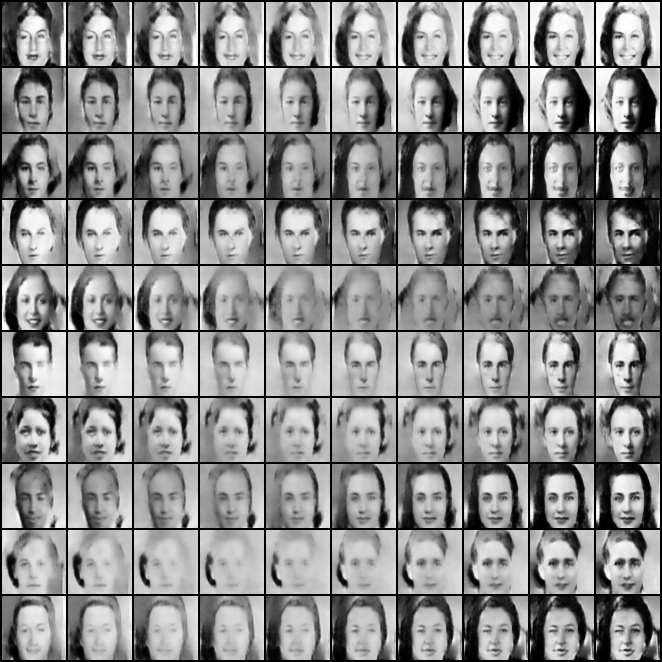} } \label{fig:yearbookBinaryMOMinterp}}
  \caption{Example generated yearbook images from two time periods: Old (1925-1944) and Recent (2000-2013). Target distribution is 83\%/17\% while the given data $\ModFunc\P$ is split 50\%/50\%. Each time period contains enough images to train C-CDGAN successfully. However, the other methods produce qualitatively sharper images (a)--(d) with smoother latent interpolations (e)--(h).}%
 \label{fig:yearbookBinaryCGAN}%\\
\end{figure}

\subsection{MNIST}
Analogous to the class rebalancing problem of Section~\ref{sec:synth}, Figure \ref{fig:multi_boost} shows good performance going from a balanced distribution to specific boosted levels.

Analogous to the self-normalized example of Section~\ref{sec:selfnorm}, we use our self-normalized estimator to manipulate the distribution over twos from the MNIST dataset, where we aim to have fewer curly twos and more twos with a flat bottom. As before, 200 were manually labeled with weights. Fig.~\ref{fig:mnist_twos_data_sorted} shows 100 real images, sorted in terms of their inferred weight. Fig.~\ref{fig:mnist_twos_gens_sorted} shows 100 generated simulations, sorted in the same manner, clearly showing a decrease in the proportion of curly twos. Fig.~\ref{fig:mnist_sevens_lines} shows the inferred weights for both real and simulated data. 

\begin{figure}[t!]
\centering
    \subfloat[Source, even distribution of 0s, 1s, and 5s]{{\includegraphics[width=.27\columnwidth]{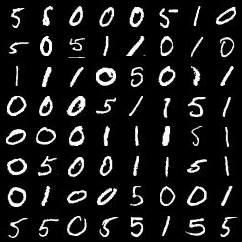} }\label{fig:multi_boost_x_source}}
  \hspace{.05in}
    \subfloat[Source (left), simulation (right); target of $10\%$-$30\%$-$60\%$]{{\includegraphics[width=.27\columnwidth]{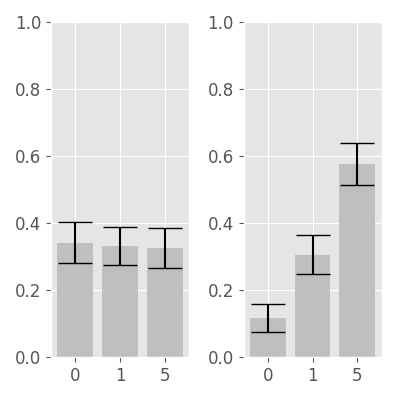} } \label{fig:multi_boost_plots}}
  \hspace{.05in}
    \subfloat[Simulations, boosted distribution]{{\includegraphics[width=.27\columnwidth]{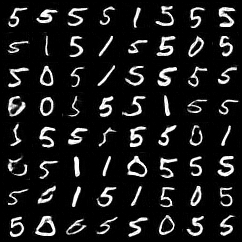} }\label{fig:multi_boost_result}}
  \caption{Importance weights are used to accurately boost an even class distribution to specified levels.}%
 \label{fig:multi_boost}%\\
\end{figure}

\begin{figure}[h]
  \centering
  \subfloat[Data]{{\includegraphics[width=.27\columnwidth]{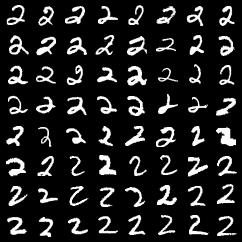} } \label{fig:mnist_twos_data_sorted}}
  \subfloat[Generator]
  {\hspace{0.05in}
  \includegraphics[width=.27\columnwidth]{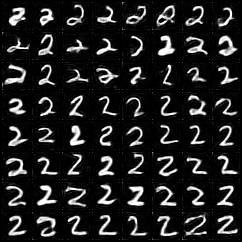} \label{fig:mnist_twos_gens_sorted}}
 {\hspace{0.05in}
 \subfloat[KS distance]{
  \includegraphics[width=.27\columnwidth]{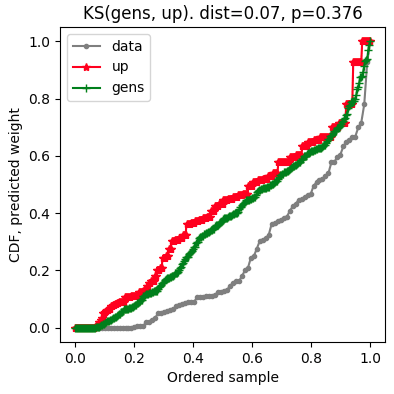} \label{fig:mnist_twos_lines}}}\\
  \caption{A small set of labels are used to train an importance weighted estimator that aims to boost the presence of flat-bottomed twos. In \ref{fig:mnist_twos_data_sorted} and \ref{fig:mnist_twos_gens_sorted}, samples are sorted by predicted weight, and in \ref{fig:mnist_twos_lines}, the empirical CDFs of data, generated, and importance duplicated draws, are shown, where the latter serves as a theoretical target. The  generated distribution produces more flat-bottomed twos, and is close in distance to the target, with $d_{KS} = 0.07$, $p=0.376$.}%
 \label{fig:mnist_twos}%\\
\end{figure}

\end{appendices}

\end{document}